%% file: main_arxiv.tex
\documentclass{article}
\usepackage{fullpage}

\usepackage[title]{appendix}

\usepackage{algpseudocode}

\usepackage{diagbox}
\usepackage[utf8]{inputenc} 
\usepackage[T1]{fontenc}    
\usepackage{url}            
\usepackage{multirow}
\usepackage{booktabs}       
\usepackage{amsfonts,amssymb,amsthm}       
\usepackage{nicefrac}       
\usepackage{microtype}      
\usepackage{hhline}
\usepackage{graphicx}
\usepackage{booktabs} 

\usepackage{indentfirst}
\usepackage{mathtools}
\usepackage{amsmath}
\usepackage{amssymb}
\usepackage{mathtools}
\usepackage{appendix}

\usepackage[utf8]{inputenc} 
\usepackage[T1]{fontenc}    
\usepackage{hyperref}       
\usepackage{url}            
\usepackage{booktabs}       
\usepackage{amsfonts}       
\usepackage{nicefrac}       
\usepackage{microtype}      
\usepackage{xcolor}         

\usepackage{afterpage}

\usepackage{amsfonts,enumitem,amsmath,amsthm}       
\usepackage{nicefrac}       
\usepackage{microtype}      
\usepackage{xcolor,appendix}         
\usepackage{float}          
\usepackage{graphicx}
\usepackage{subcaption}
\usepackage{amssymb} 
\usepackage[linesnumbered,ruled,vlined]{algorithm2e}

\newtheorem{proposition}{Proposition}

\usepackage{algpseudocode}

\newtheorem{definition}{Definition}
\newtheorem{theorem}{Theorem}

\newtheorem{lemma}{Lemma}

\theoremstyle{plain}
\theoremstyle{definition}

\input{math_commands}

\input{math_commands_specific}

\newcommand{\citet}{\cite}
\newcommand{\citep}{\cite}

\title{\textbf{PromptSplit: Revealing Prompt-Level Disagreement in Generative Models
}}

\date{}

\author{
Mehdi Lotfian$^{1}$,
Mohammad Jalali$^{1}$,
Farzan Farnia$^{1}$ \\
$^{1}$Department of Computer Science and Engineering, The Chinese University of Hong Kong \\
lotfian25@cse.cuhk.edu.hk,
mjalali24@cse.cuhk.edu.hk,
farnia@cse.cuhk.edu.hk
}

\begin{document}
\maketitle

\begin{abstract}
    \input{sec/0_abstract}
\end{abstract}

\section{Introduction}
\input{sec/1_intro}

\section{Related Works}
\input{sec/2_relatedwork}

\section{Preliminaries}
\input{sec/3-prelim}

\section{Method}
\input{sec/4_method}

\section{Numerical Results}
\input{sec/4_numerical}

\section{Conclusion}
\input{sec/5_conclusion}

\bibliographystyle{unsrt}
\bibliography{ref}

\newpage
\appendix
\begin{appendices}
\section{Proofs}
\input{sec/6_appendix_proofs}

\clearpage
\section{Additional Numerical Results}
\input{sec/6_appendix_results}

\end{appendices}

\end{document}

%% file: math_commands.tex
\usepackage{amsmath,amsfonts,bm}

\def\eqref#1{equation~\ref{#1}}

\def\1{\bm{1}}
\def\0{\bm{0}}

\DeclareMathAlphabet{\mathsfit}{\encodingdefault}{\sfdefault}{m}{sl}
\SetMathAlphabet{\mathsfit}{bold}{\encodingdefault}{\sfdefault}{bx}{n}



%% file: math_commands_specific.tex
\usepackage{inputenc}
\usepackage{fontenc}
\usepackage{indentfirst}
\usepackage{cite}
\usepackage{hyperref}
\usepackage{amssymb}
\usepackage{float}
\usepackage{amsmath}
\usepackage{graphicx}
\usepackage{amsthm}

%% file: sec/0_abstract.tex
  Prompt-guided generative AI models have rapidly expanded across vision and language domains, producing realistic and diverse outputs from textual inputs. The growing variety of such models, trained with different data and architectures, calls for principled methods to identify which types of prompts lead to distinct model behaviors. In this work, we propose PromptSplit, a kernel-based framework for detecting and analyzing prompt-dependent disagreement between generative models. For each compared model pair, PromptSplit constructs a joint prompt--output representation by forming tensor-product embeddings of the prompt and image (or text) features, and then computes the corresponding kernel covariance matrix. We utilize the eigenspace of the weighted difference between these matrices to identify the main directions of behavioral difference across prompts. To ensure scalability, we employ a random-projection approximation that reduces computational complexity to $\mathcal{O}(nr^2 + r^3)$ for projection dimension $r$. We further provide a theoretical analysis showing that this approximation yields an eigenstructure estimate whose expected deviation from the full-dimensional result is bounded by $\mathcal{O}(1/r^2)$. Experiments across text-to-image, text-to-text, and image-captioning settings demonstrate that PromptSplit accurately detects ground-truth behavioral differences and isolates the prompts responsible, offering an interpretable tool for detecting where generative models disagree. The
implementation and codebase are publicly available at \url{https://github.com/MehdiLotfian/PromptSplit}.

%% file: sec/1_intro.tex
\begin{figure*}
    \centering
    \includegraphics[width=1\linewidth]{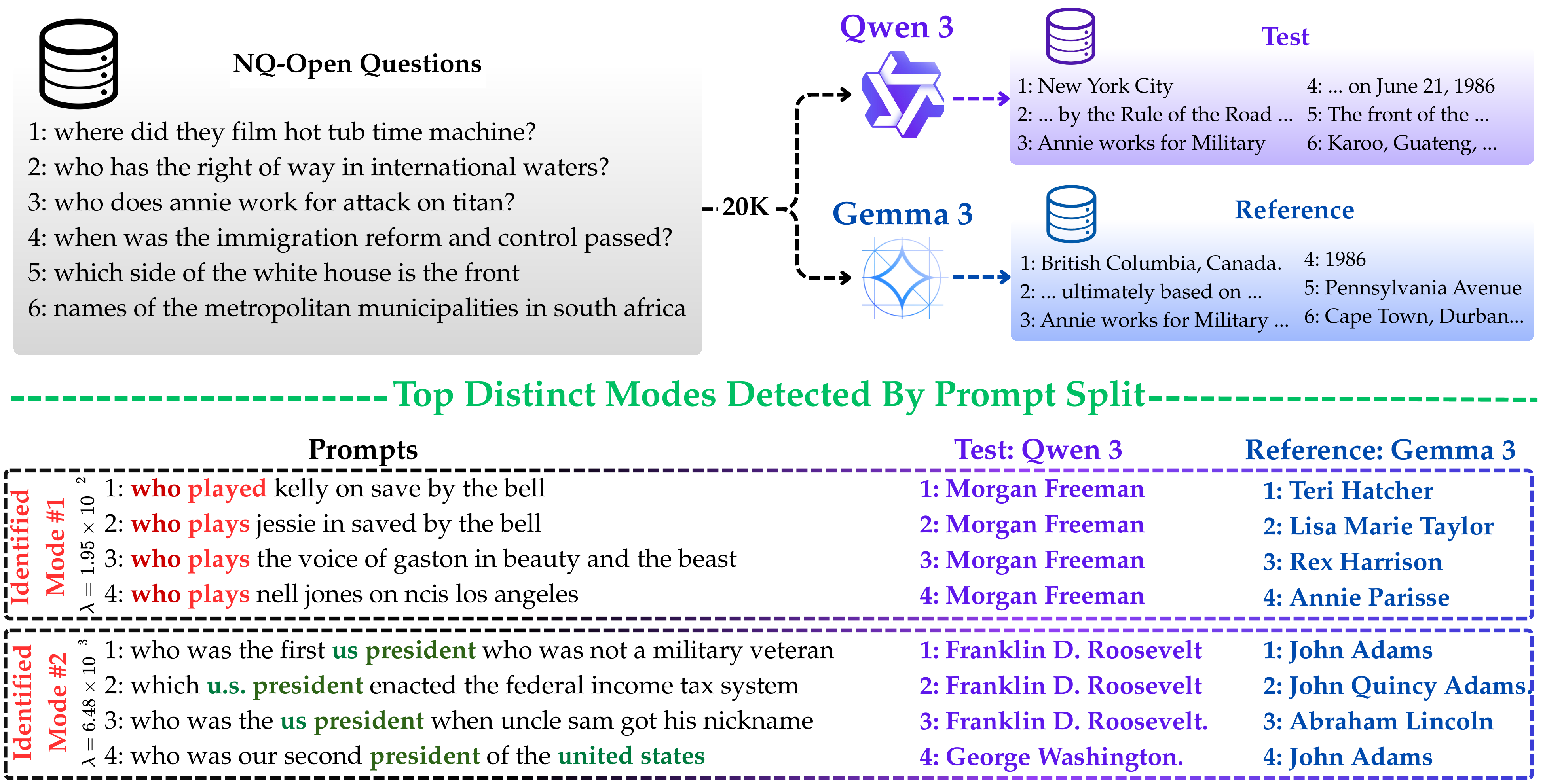}\vspace{-2mm}
    \caption{ Overview of PromptSplit for discovering different types of (prompt,answer) between two models. (a) From NQ-Open questions, we generate outputs from the test model (Qwen3) and reference model (Gemma3). (b) Two high-scoring modes found by PromptSplit: for each mode we show representative prompts and the corresponding model outputs.}
    \label{fig:introduction}
\end{figure*}

Prompt-guided generative models have advanced rapidly across vision and language, enabling high-fidelity synthesis from input text prompts. In computer vision, diffusion-based text-to-image and text-to-video generation systems have demonstrated impressive realism and prompt alignment
\cite{rombach2022ldm,ramesh2022dalle2,saharia2022imagen,singer2022makeavideo,ho2022imagenvideo}. In parallel, large language models (LLMs) have scaled text generation and often serve as language backbones within multimodal pipelines \cite{brown2020gpt3,touvron2023llama,gemini2023,bi2024deepseek,qwenteam2024qwen2,minaee2024llmsurvey}. These advances have produced a rich ecosystem of prompt-guided AI models differing in training data, model architectures, and conditioning strategies.

With such diversity, a central question arises: \emph{how can we systematically compare prompt-guided generative models and determine when their responses differ for an input prompt?} Beyond qualitative demonstrations, quantitative assessment has become a key research focus. While fidelity evaluation metrics such as FID~\cite{heusel2017ttur}, Inception Score~\cite{salimans2016improved}, and CLIP-Score~\cite{hessel2021clipscore} provide useful aggregate indicators of fidelity and alignment, they compress model behavior into single quality-based scores and can obscure prompt-dependent discrepancies across models that may not necessarily stem from the quality of model outputs. For example, a text-to-image model $\mathcal{G}_A$ model may generate a female-individual whenever the prompt contains the word "person", while text-to-image model $\mathcal{G}_B$ always generates the picture of a male individual. While such a difference may not influence the quality of image generation for a single prompt, it may lead to different model responses for certain categories of input prompts.   A more informative comparison should reveal where and how two models behave differently \emph{as a function of the prompt} and specifically identify the prompt categories that lead to divergent output responses by the two models.

Recent work has explored distributional and spectral methods to compare the output spaces of generative models from their samples \cite{zhang2025finc,zhang2024ken}. However, existing formulations generally operate in a prompt-free setting, effectively marginalizing over prompts. Applying such methods directly to prompt-conditioned generation ignores the input prompt and blurs prompt-aware differences into an aggregate distribution. A naive workaround is to analyze each prompt separately by generating many outputs per prompt and running a per-prompt spectral comparison. However, such an approach will be computationally expensive as it requires a significant number of output generations per prompt, which may be unnecessary for the goal of identifying only the prompt categories of the different behavior.

To address this task, we introduce \emph{PromptSplit} (Figure~\ref{fig:introduction}), a prompt-aware unsupervised spectral framework to detect prompt categories leading to different model behaviors, which couples prompts and outputs within a joint representation to attain the goal. To apply PromptSplit, for every generative model, we construct tensor-product embeddings of the prompt and output features (e.g., text and image embeddings in text-to-image models) and then compute the joint kernel covariance matrix of every model. For a pair of models with kernel covariance matrices $C_{X\otimes T}$ and $C_{Y\otimes T}$, we analyze their weighted difference as follows
\begin{equation}
\Lambda_{X, Y|T} \;:=\; C_{X\otimes T} - \,C_{Y\otimes T}.
\end{equation}
 We highlight that the principal eigenvalues and eigenvectors of $\Lambda_{X, Y|T}$ can reveal the prompt clusters resulting in different output structures by the models. To compute the eigenvectors of the above matrix, we apply kernel trick and show that the eigendirections of the above matrix is in one-to-one correspondence to those of the following block kernel matrix:
\begin{equation*}
\mathbf{K}_{X,Y|T} \;=\;
\begin{bmatrix}
K_{XX}\odot K_{TT} & K_{XY} \odot K_{TT}\vspace{2mm} \\
-K_{YX}\odot K_{TT} & -K_{YY} \odot K_{TT}
\end{bmatrix},
\end{equation*}
where $\odot$ denotes the elementwise matrix product and each $K_{TT'},K_{XX},K_{XY}$ represents kernel similarities between prompts or outputs across the two models. This joint structure enables spectral comparison of model responses while explicitly accounting for prompt information, allowing PromptSplit to identify prompt categories that differentiate the two generation models.

Subsequently, we discuss that a direct computation of the eigenvectors of $\mathbf{K}_\Delta$ will lead to a cubically growing complexity $\mathcal{O}\bigl((m+n)^3\bigr)$ in the number of samples, limiting the application of the algorithm to $m,n$ values of at most a few tens of thousands. To apply PromptSplit for significantly larger sample sizes, which would be necessary to ensure its proper convergence, we introduce a random-projection reduction of the joint (tensor) features to a target dimension $r$, reducing per-sample cost to $\mathcal{O}(r(d_t+d_x))$ for prompt and output embedding dimensions $d_t,d_X$, and overall spectral decomposition complexity to $\mathcal{O}\bigl((m+n)r^2+r^3\bigr)$. We further show that this approximation achieves an expected eigenspace deviation bounded by $\mathcal{O}(1/r^2)$, enabling efficient spectral analysis with bounded random projection dimension $r$ values.

We evaluate PromptSplit across text-to-image models and text-to-text (LLM) comparisons. In synthetic settings with known prompt-wise differences, PromptSplit accurately recovers the responsible prompt categories. On real systems, including latent-diffusion and diffusion-transformer models such as Stable Diffusion, Kandinsky, and PixArt—PromptSplit reveals prompt families where responses diverge in style, composition, and alignment, complementing aggregate metrics with interpretable prompt-level disagreement maps \cite{rombach2022ldm,razzhigaev2023kandinsky,arkhipkin2024kandinsky3,chen2023pixartalpha}.In summary, this work (i) formulates prompt-level model comparison as a joint prompt–output spectral problem, (ii) introduces PromptSplit for analyzing the eigenspectrum of joint kernel covariance differences, (iii) provides a scalable random-projection approximation with $\mathcal{O}(1/r^2)$ theoretical accuracy given projection dimension $r$, and (iv) numerically demonstrates the analysis of prompt-dependent disagreement across synthetic and real prompt-guided generative models.

%% file: sec/2_relatedwork.tex
\textbf{Evaluation of generative models.}
Classical metrics for evaluating generative models include IS~\cite{salimans2016improved}, FID~\cite{heusel2017ttur}, and KID~\cite{binkowski2018kid}, while precision–recall metrics~\cite{kynkaanniemi2019precisionrecall} and density/coverage~\cite{naeem2020density} separately measure fidelity and diversity. Spectral metrics such as Vendi~\cite{friedman2022vendi}, RKE~\cite{jalali2023rke}, and KEN~\cite{zhang2024ken} incorporate eigenvalue distributions of kernel similarities to capture global diversity and novelty. CLIPScore~\cite{hessel2021clipscore} evaluates text–image alignment using CLIP embeddings~\cite{radford2021clip}, and recent analyses~\cite{stein2023exposing} show that DINOv2 features~\cite{oquab2023dinov2} often yield more reliable evaluations than Inception features. These metrics provide scalar fidelity/diversity summaries, whereas PromptSplit uses embeddings only to build joint kernels and performs prompt-conditioned spectral comparison of two models.

\textbf{Comparisons of generative models.}
Spectral and kernel-based approaches compare generative models by analyzing eigenspectra or kernel embeddings of their outputs. The spectral methods in ~\cite{zhang2024ken,zhang2025finc,ospanov2024fkea} detects novelty relative to a reference distribution. Beyond generative models, dataset-level comparison frameworks explain how two datasets differ, either via interpretable prototype and influential-example explanations~\cite{babbar2025datasets} or via transport maps and counterfactuals for image-based distribution shifts~\cite{Kulinski_2022_CVPR}, and survey work provides a broad taxonomy of dataset similarity measures including distance-, kernel-, and embedding-based criteria~\cite{stolte2024datasetsimilarity}. Compared to these unconditional or dataset-level methods, PromptSplit directly targets \emph{prompt-conditioned} comparison by forming joint prompt–output embeddings and performing spectral analysis on prompt-dependent covariance differences.

\textbf{Interpretability for generative models and embeddings.}
Network Dissection~\cite{bau2017netdissect} and GAN Dissection~\cite{bau2019gandissect} analyze internal units of CNNs and GANs to identify semantic concepts, while GANSpace~\cite{harkonen2020ganspace} uncovers interpretable latent directions through PCA in latent space. Concept activation vectors (TCAV)~\cite{kim2018tcav} assign user-defined concepts to model directions. These methods explain latent or hidden representations \emph{within} a single model. PromptSplit differs by analyzing disagreement between \emph{two models} through eigenspaces of prompt–output kernel covariance differences, without accessing internal activations.

\textbf{Random projection of feature embeddings.}
Random features~\cite{rahimi2007random} approximate shift-invariant kernels with low-dimensional embeddings, enabling scalable kernel learning. Approximate kernel $k$-means~\cite{chitta2011akk} and explicit polynomial feature maps~\cite{pham2013explicit} accelerate clustering and large-scale learning, and Fourier features~\cite{tancik2020fourier} are widely used to capture high-frequency structure in neural fields. PromptSplit uses a model-agnostic linear projection of joint tensor-product embeddings to reduce the cost of eigen-decomposition while preserving disagreement directions.

%% file: sec/3-prelim.tex
\subsection{Kernel Matrices and Covariance Operators}

A kernel $k:\mathcal{X}\times\mathcal{X}\to\mathbb{R}$ is defined to be a symmetric positive semi-definite (PSD) function that admits a feature map $\phi:\mathcal{X}\to\mathcal{H}$ into a reproducing kernel Hilbert space (RKHS) $\mathcal{H}$ such that
\[
k(x,y)=\langle \phi(x),\phi(y)\rangle_{\mathcal{H}}.
\]
Given samples $\{x_i\}_{i=1}^n$, the empirical kernel (Gram) matrix is defined as $
K_{XX}=\bigl[k(x_i,x_j)\bigr]_{i,j=1}^n\in\mathbb{R}^{n\times n}$, and for two sets $\{x_i\}_{i=1}^n$ and $\{y_j\}_{j=1}^m$, the cross-kernel matrix is $K_{XY}=\bigl[k(x_i,y_j)\bigr]_{i,j}$.

The kernel covariance operator associated with a random variable $X\sim P_X$ is defined by
\[
C_X=\mathbb{E}_{X\sim P_X}\!\big[\phi(X)\phi(X)^\top\big].
\]
We denote the empirical kernel covariance operator with $\widehat{C}_X=\tfrac{1}{n}\sum_{i=1}^n\phi(x_i)\phi(x_i)^\top$. Note that the nonzero eigenvalues of $\widehat{C}_X$ coincide with those of $\tfrac{1}{n}K_{XX}$,
since $\tfrac{1}{n}K_{XX} = \frac{1}{n}\Phi_X^\top \Phi_X$ and $\widehat{C}_X = \frac{1}{n}\Phi_X \Phi_X^\top$, for matrix $\Phi_X = \bigl[\phi(x_1); \cdots; \phi(x_n) \bigr]^\top \in \mathbb{R}^{n\times d}$, where the matrix multiplication order is flipped, preserving the non-zero eigenvalues. 
Assuming a normalized kernel satisfying $k(x,x)=1$ for every $x\in\mathcal{X}$, we note that the eigenvalues of the normalized kernel matrix $\tfrac{1}{n}K_{XX}$ will be all non-negative and they sum up to one as $\mathrm{Tr}\bigl(\tfrac{1}{n}K_{XX}\bigr) = 1$.  

We call a kernel function $k$ shift invariant if there exists a function $\kappa:\mathbb{R}^d\rightarrow\mathbb{R}$ such that for every $x,y\in\mathbb{R}^d: k(x,y)= \kappa(x-y)$. For shift-invariant kernels of the form $k(x,y)=\kappa(x-y)$ on $\mathbb{R}^d$, Bochner’s theorem proves that the Fourier transform $\widehat{\kappa}:\mathcal{R}^d\rightarrow \mathbb{R}$ is a valid probability density function (PDF), where the Fourier transform is defined as
\begin{equation*}
   \widehat{\kappa}(\omega) := \frac{1}{(2\pi)^d}\int_{\mathbb{R}^d} \kappa(x) \exp\bigl(-i\langle \omega , x\rangle\bigr)\mathrm{d}x, 
\end{equation*}
then we have $
k(x,y)={\mathbb{E}}_{\omega\sim\widehat{\kappa}}\bigl[\exp(i\langle \omega , x-y\rangle )\bigr]$.
The Random Fourier Features (RFF) \cite{rahimi2007random,sutherland2015error} approximate such kernels via Monte Carlo sampling. Drawing $\boldsymbol{\omega}_1,\ldots,\boldsymbol{\omega}_r\stackrel{\text{i.i.d.}}{\sim}\widehat{\kappa}$, we define the following proxy RFF feature map:
\[
\varphi_r(x)=\frac{1}{\sqrt{r}}\big[\cos(\boldsymbol{\omega}_\ell^\top x),\sin(\boldsymbol{\omega}_\ell^\top x)\big]_{\ell=1}^r,
\]
which satisfies $\mathbb{E}[\varphi_r(x)^\top\varphi_r(y)]=k(x,y)$. Replacing $\phi(x)$ with $\varphi_r(x)$ yields a low-dimensional approximation
\[
\widehat{C}_X\approx \frac{1}{n}\sum_{i=1}^n \varphi_r(x_i)\varphi_r(x_i)^\top\in\mathbb{R}^{2r\times2r}.
\]

\subsection{Tensor-Product Kernels and Hadamard Joint Kernel Matrices}

Let $T$ and $X$ denote the random variables of an input prompt and generated output, equipped with feature maps  
$\phi_T:\mathcal{T}\to\mathbb{R}^{d_t}$ and $\phi_X:\mathcal{X}\to\mathbb{R}^{d_x}$.  
We define the joint tensor-product feature map as
\[
\phi_{\otimes}(t,x)=\phi_T(t)\otimes \phi_X(x)\in\mathbb{R}^{d_t}\otimes\mathbb{R}^{d_x},
\]
where $\otimes$ denotes the tensor product of Hilbert spaces, which is for vectors $t=[t^{(1)},\ldots , t^{(d_t)}]\in\mathbb{R}^{d_t}$ and $x=[x^{(1)},\ldots , x^{(d_x)}]\in\mathbb{R}^{d_x}$, is defined as:
\begin{equation*}
    t\otimes x \!=\! \bigl[t^{(1)} x^{(1)},. , t^{(1)} x^{(d_x)},..,t^{(d_t)}x^{(1)},. , t^{(d_t)}x^{(d_x)} \bigr] \!\in\!\mathbb{R}^{d_xd_t}
\end{equation*}
This representation captures multiplicative interactions between prompt and output embeddings and induces the product kernel
\begin{align*}
k_{\otimes}\bigl([t,x],[t',x']\bigr)
&\, =\,\bigl\langle \phi_{\otimes}(t,x),\phi_{\otimes}(t',x')\bigr\rangle
\\
&
\, =\, k_T(t,t')\cdot k_X(x,x').
\end{align*}
Given empirical samples $\{(t_i,x_i)\}_{i=1}^n$, the empirical joint kernel covariance is
\[
C_{X\otimes T}
=\frac{1}{n}\sum_{i=1}^n \phi_\otimes(t_i,x_i)\phi_\otimes(t_i,x_i)^\top.
\]

%% file: sec/4_method.tex
We are given two prompt–response datasets produced by two generative systems,
\[
\mathcal{D}_X=\{(t_i,x_i)\}_{i=1}^{n},\qquad
\mathcal{D}_Y=\{(t'_j,y_j)\}_{j=1}^{m}.
\]
Each prompt $t$ and output $z\!\in\!\{x,y\}$ is embedded and mapped to RKHSs
via feature maps $\phi_T:\mathcal{T}\!\to\!\mathcal{H}_T$ and
$\phi_X,\phi_Y:\mathcal{X}\!\to\!\mathcal{H}_X$ with normalized kernels $k_\mathcal{T}:\mathcal{T}\times\mathcal{T}\to \mathbb{R}$ and $k_\mathcal{X}:\mathcal{X}\times\mathcal{X}\to \mathbb{R}$ 
satisfying $k_T(t,t)=k_X(x,x)=1$ for every $t\in \mathcal{T},\, x\in\mathcal{X}$.  
Our goal is to identify and interpret where, in the joint prompt–output
space, the two systems differ.  When prompt distributions between the two datasets match $P_T = P_{T'}$, the
analysis from \cite{zhang2024ken}suggests that the kernel matrices of the collected data can reveal the differences between the two conditional distributions $P_{X\mid T}$ and $P_{Y\mid T}$.

\subsection{PromptSplit via Joint Kernel Covariance Difference}

Consider the joint tensor feature $\phi_\otimes(t,x)=\phi_T(t)\otimes\phi_X(x)$. Then, the empirical joint covariances of the two datasets $\mathcal{D}_X$ and $\mathcal{D}_Y$ are defined as
\begin{align*}
\widehat{C}_{T\otimes X}&=\frac{1}{n}\sum_{i=1}^n
   \phi_\otimes(t_i,x_i)\phi_\otimes(t_i,x_i)^{\top},\\
\widehat{C}_{T\otimes Y}&=\frac{1}{m}\sum_{j=1}^m
   \phi_\otimes(t'_j,y_j)\phi_\otimes(t'_j,y_j)^{\!\top}.
\end{align*}

\begin{definition} 
 The covariance–difference operator between $\mathcal{D}_X$ and $\mathcal{D}_Y$ with hyperparameter $\eta>0$ is defined as
\begin{equation}
\widehat{\Lambda}_{X,Y|T}
=\widehat{C}_{T\otimes X}-\eta\,\widehat{C}_{T\otimes Y},
\end{equation}
\end{definition}
As demonstrated in \cite{zhang2024ken}, the eigendirections of the above matrix can reveal the differently expressed modes between the distributions $P_{T,X}$ and $P_{T',Y}$. We highlight that when the prompt marginal distributions $P_T$ and $P_T'$ of the two datasets are the same, the differences between the (prompt,output) joint distributions will reveal the differences between the conditional models $P_{X|T}$ and $P_{Y|T}$, that is precisely the aim of our comparative analysis. Therefore, in our analysis, we aim to efficiently compute the principal eigendirections of $\widehat{\Lambda}_{X,Y|T}$ for a sufficiently large dataset size $n$.

\begin{algorithm}[t]
\caption{PromptSplit: Kernel-based Formulation}
\label{alg:promptsplit}

\KwIn{Datasets $\mathcal{D}_X=\{(t_i,x_i)\}_{i=1}^{n}$,
$\mathcal{D}_Y=\{(t'_j,y_j)\}_{j=1}^{m}$;
kernels $k_T,k_X$; parameter $\eta>0$;
eigenpair number $R$.}

\KwOut{Positive eigvalues $\{\lambda_r\}$, eigenfunctions $\{v_r\}$.}

Build kernel blocks:
$K_{TT},K_{XX}$ on $\mathcal{D}_X$;
$K_{T'T'},K_{YY}$ on $\mathcal{D}_Y$;
and cross-blocks $K_{TT'},K_{XY}$.

Form $K_{X,\eta Y|T}$ as in~\eqref{eq:kblock}.

Compute the top $R_+$ positive eigenpairs
$\{(\lambda_r,u_r)\}$ with $u_r=[u_{1:n}^{(r)};u_{(n+1):(n+m)}{(r)}]$.

Construct eigenfunctions
$v_r=\sum_{i=1}^n u^{(r)}_i\phi_X(t_i,x_i)
     +\sum_{j=1}^m u^{(r)}_{n+j}\phi_X(t'_j,y_j)$.
     
\end{algorithm}

First, we observe that for the prompt and output feature dimensions $d_t,d_x$, the dimension of matrix $\widehat{C}_{X\otimes T}$ will be $d_td_x\times d_td_x$. As the standard embedding dimensions for input prompt and output visual data are usually lower bounded by several hundreds, e.g. $512$ for CLIP embeddings, performing the eigendecomposition of $\widehat{\Lambda}_{X,Y|T}$ with even a simple linear kernel, would require the eigendecomposition of a matrix with at least a few hundreds of thousands rows, which would be infeasible on standard CPU and GPU processors.

To address the computatioanl challenge, we first apply the kernel trick and formulate a kernel matrix of size $2n\times 2n$ for $n$ samples that share the eigenspectrum with $\widehat{\Lambda}_{X,Y|T}$. To do this, let $K_{TT},K_{XX}$ be prompt/output kernels on $\mathcal{D}_X$, and
$K_{T'T'},K_{YY}$ prompt/output kernels on $\mathcal{D}_Y$, and
$K_{TT'},K_{XY}$ be the cross-kernels.
Considering the product kernel
$k_\odot\bigl([t,x],[t',x']\bigr)=k_\mathcal{T}(t,t')\cdot k_\mathcal{X}(x,x')$, we define the following kernel matric $K_{X,\eta Y|T}$ 
\begin{equation}
K_{X,\eta Y|T}=\hspace{-1.5mm}
\begin{bmatrix}
\frac{1}{n}K_{TT}\odot K_{XX}
  & \frac{1}{\sqrt{nm}}K_{TT'}\odot K_{XY}\vspace{2mm}\\
-\frac{\eta}{\sqrt{nm}}K_{TT'}^\top\odot K_{XY}^{\top}
  & -\frac{\eta}{m}K_{T'T'}\odot K_{YY}
\end{bmatrix}.
\label{eq:kblock}
\end{equation}

\begin{proposition}\label{prop:hadamard}
The matrices $\widehat{\Lambda}_{X,Y|T}$  and $K_{X,\eta Y|T}$ share the same non-zero eigenvalues. Also,  
for every $K_{X,\eta Y|T}$'s eigenvector $u=[u_{1:n};u_{(n+1):(n+m)}]$ of $K_{X,\eta Y|T}$
with non-zero eigenvalue $\lambda$, then the following $v$ is the eigenvector of $\widehat{\Lambda}_{X,Y|T}$ for the same eigenvalue $\lambda$:
\[
v=\sum_{i=1}^{n}v_i\phi_\otimes(t_i,x_i)
  +\sum_{j=1}^{m}v_{n+j}\phi_\otimes(t'_j,y_j)
\]
\end{proposition}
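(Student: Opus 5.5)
The plan is to write both matrices as products of the same two factors in opposite orders, and then use the fact that $AB$ and $BA$ have the same non-zero eigenvalues. The natural factors come from the joint feature matrix. Set $\Phi_X = [\phi_\otimes(t_1,x_1),\ldots,\phi_\otimes(t_n,x_n)]$ and $\Phi_Y=[\phi_\otimes(t'_1,y_1),\ldots,\phi_\otimes(t'_m,y_m)]$, and stack them with scalings as
\[
A=\Bigl[\tfrac{1}{\sqrt n}\Phi_X,\ \tfrac{1}{\sqrt m}\Phi_Y\Bigr],\qquad D=\mathrm{diag}\bigl(I_n,\,-\eta I_m\bigr).
\]
A direct expansion then gives
\[
A D A^\top=\tfrac1n\Phi_X\Phi_X^\top-\tfrac{\eta}{m}\Phi_Y\Phi_Y^\top=\widehat{C}_{T\otimes X}-\eta\widehat{C}_{T\otimes Y}=\widehat{\Lambda}_{X,Y|T}.
\]

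Next I would check that $D A^\top A = K_{X,\eta Y|T}$. The key identity is $\langle \phi_\otimes(t,x),\phi_\otimes(t',x')\rangle = k_T(t,t')k_X(x,x')$, which gives $\Phi_X^\top\Phi_X = K_{TT}\odot K_{XX}$, $\Phi_X^\top\Phi_Y=K_{TT'}\odot K_{XY}$ and $\Phi_Y^\top\Phi_Y=K_{T'T'}\odot K_{YY}$. Including the scalings $1/n$, $1/\sqrt{nm}$, $1/m$ and then left-multiplying by $D$ flips the sign of the second block row and multiplies it by $\eta$. This reproduces \eqref{eq:kblock} block by block, including the $-\eta/\sqrt{nm}$ factor on $K_{TT'}^\top\odot K_{XY}^\top$.

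With $B := D A^\top$ we have $\widehat{\Lambda}_{X,Y|T}=AB$ and $K_{X,\eta Y|T}=BA$, so the standard $AB$/$BA$ argument settles the eigenvalue claim.

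\begin{itemize}
\item If $BAu=\lambda u$ with $\lambda\neq0$, put $v=Au$. Then $ABv = A(BAu)=\lambda Au=\lambda v$.
\item The vector $v$ is non-zero, because $Av = 0$ would force $BAu=0$ and hence $\lambda u=0$.
\item The converse direction maps $v\mapsto Bv$ and works the same way.
\item Both maps are injective on the $\lambda$-eigenspaces, so geometric multiplicities also agree.
\end{itemize}

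The eigenvector formula follows by writing out $v=Au=\sum_i \tfrac{u_i}{\sqrt n}\phi_\otimes(t_i,x_i)+\sum_j \tfrac{u_{n+j}}{\sqrt m}\phi_\otimes(t'_j,y_j)$. This matches the stated form with coefficients $v_i=u_i/\sqrt n$ and $v_{n+j}=u_{n+j}/\sqrt m$. I read the statement's unspecified $v_i$ as these rescaled coefficients of $u$, and the proof should say so explicitly.

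The main obstacle is bookkeeping rather than ideas. The sign and $\eta$ have to be placed so that $K_{X,\eta Y|T}$, which is non-symmetric, comes out as $DA^\top A$ and not as $A^\top A D$. The latter would put the $-\eta$ on the columns rather than the rows of the second block and would not match \eqref{eq:kblock}. For infinite-dimensional RKHS features, $A$ is an operator from $\mathbb{R}^{n+m}$ to $\mathcal{H}_T\otimes\mathcal{H}_X$ and $A^\top$ is its adjoint. The argument is unchanged, since it uses only associativity and finite-rank compositions.
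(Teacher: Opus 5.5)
Your proof is correct and follows the same route as the paper: you write $\widehat{\Lambda}_{X,Y|T}$ and $K_{X,\eta Y|T}$ as the two orderings of a product built from the stacked joint-feature matrix and a diagonal sign/weight matrix, invoke the $AB$/$BA$ fact, and map eigenvectors through the feature matrix. Your placement of the $1/\sqrt{n}$, $1/\sqrt{m}$ scalings inside $A$ (sign and $\eta$ in $D$, so $K_{X,\eta Y|T}=DA^\top A$) is the bookkeeping that actually reproduces the $\frac{1}{\sqrt{nm}}$ off-diagonal blocks of \eqref{eq:kblock}; the paper's proof instead uses unscaled stacked features $\Psi$ and the unscaled $v=\Psi^\top u$, which is an eigenvector only when $n=m$, so your coefficients $u_i/\sqrt{n}$, $u_{n+j}/\sqrt{m}$ are the right ones (one slip: in your non-vanishing argument, ``$Av=0$'' should read ``$v=Au=0$'').
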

\begin{proof}
We defer the proof to the Appendix.
\end{proof}
The above proposition shows that given $n,m$ samples in the two datasets, the cost of the eigendecomposition of $\widehat{\Lambda}_{X,Y|T}$ will grow at most as $\mathcal{O}\bigl((m+n)^3\bigr)$, where the upper-bound is independent of the kernel feature map size, as long as the kernel function can be efficiently evaluated.  

\begin{algorithm}[t]
\caption{Random Projection PromptSplit}
\label{alg:rp-promptsplit}
\KwIn{Datasets $\mathcal{D}_X,\mathcal{D}_Y$;
explicit features $\phi_T,\phi_X $
(or RFF maps); RP size $r$;
parameter $\eta>0$; number of eigenpairs $R$.}
\KwOut{Reduced-dimensional eigenpairs
$\{(\widehat{\lambda}_r,\widehat{w}_r)\}$.} 
Draw Gaussian matrices
$R_T\!\sim\!\mathcal{N}(0,1)^{d_t\times r}$ and
$R_X\!\sim\!\mathcal{N}(0,1)^{d_x\times r}$;
Set $R=\frac{1}{r}(R_T \otimes R_X)$.

Compute sketched joint features:
$\widetilde{\phi}_{r}(t_i,x_i)
  =R_T\phi_T(t_i)\odot R_X\phi_X(x_i) \quad \text{and} \quad
\widetilde{\phi}_{r}(t'_i,y_i)
  =R_T\phi_T(t'_i)\odot R_X\phi_X(y_i)$.

  Form
$\widehat{\Lambda}^{(r)}
  =\frac{1}{n}\!\sum_i\!\widetilde{\Phi}_{r,X}\widetilde{\Phi}_{r,X}^{\!\top}
   -\frac{\eta}{m}\!\sum_j\!\widetilde{\Phi}_{r,Y}\widetilde{\Phi}_{r,Y}^{\!\top}$.

   Compute positive eigenpairs
$(\widehat{\lambda}_r,\widehat{w}_r)$ of $\widehat{\Lambda}^{(r)}$.
\end{algorithm}

\subsection{Random Projection for Scalable PromptSplit}
As discussed earlier, we can perform the eigendecomposition of $K_{X,\eta Y|T}$, with compute cost of 
$\mathcal{O}\bigl((n+m)^3\bigr)$ to compute the eigenspace of the target kernel covariance operator difference $\widehat{\Lambda}_{X,Y|T}$. However, when the dataset sizes grow beyong few tens of thousands, this approach becomes computationally infeasible.

To reduce the computational costs, in this section we propose a joint Gaussian random projection that approximately preserves the
kernel covariance difference geometry while reducing the dimensionality of the feature map.
Let $R_T\in\mathbb{R}^{d_T\times r}$,
$R_X\in\mathbb{R}^{d_X\times r}$ with i.i.d. $\mathcal{N}(0,1)$
entries and then define
\[
\widetilde{\phi}_r(x,t)=\frac{r}{\sqrt{d_T d_X}}\,R_T\phi_\mathcal{T}(t)\odot R_X \phi_\mathcal{X}(x)
   \in\mathbb{R}^{r}
\]
Note that the computational cost of computing $\widetilde{\phi}_r(x,t)$ will be $\mathcal{O}\bigl(r(d_x+d_t)\bigr)$. 
Then, we propose to consider the kernel covariance  difference operator with the joint feature map $\widetilde{\phi}_r$ to define:
\begin{equation}
\widetilde{\Lambda_r}_{X,Y|T}
=\widehat{C}_{\widetilde{\phi}_r(X,T)}-\eta\,\widehat{C}_{\widetilde{\phi}_r(Y,T)} \in \mathbb{R}^{r\times r}.
\end{equation}
Note that the total computational cost of computing and performing the eigen decomposition of $\widetilde{\Lambda_r}_{X,Y|T}$ wil be $\mathcal{O}\bigl((m+n)r(d_x+d_t)+r^3\bigr)$, which will grow linearly with $(m+n)(d_x+d_t)$ for a properly bounded random projection size $r$.

Furthermore, we note that in the case of a shift-invariant kernel, the above random projection can be unified with the mapping to the random Fourier feature space with the following definition of $\widetilde{\phi}_r(t,x)$ (for an even integer $r$) given the Fourier features $\omega_{x,1},\ldots ,\omega_{x,r/2}\sim \widehat{\kappa}_x$ and  $\omega_{t,1},\ldots ,\omega_{t,r/2}\sim \widehat{\kappa}_t$ for the output and prompt kernels:

\begin{figure*}[t]
    \centering
    \includegraphics[width=1\linewidth]{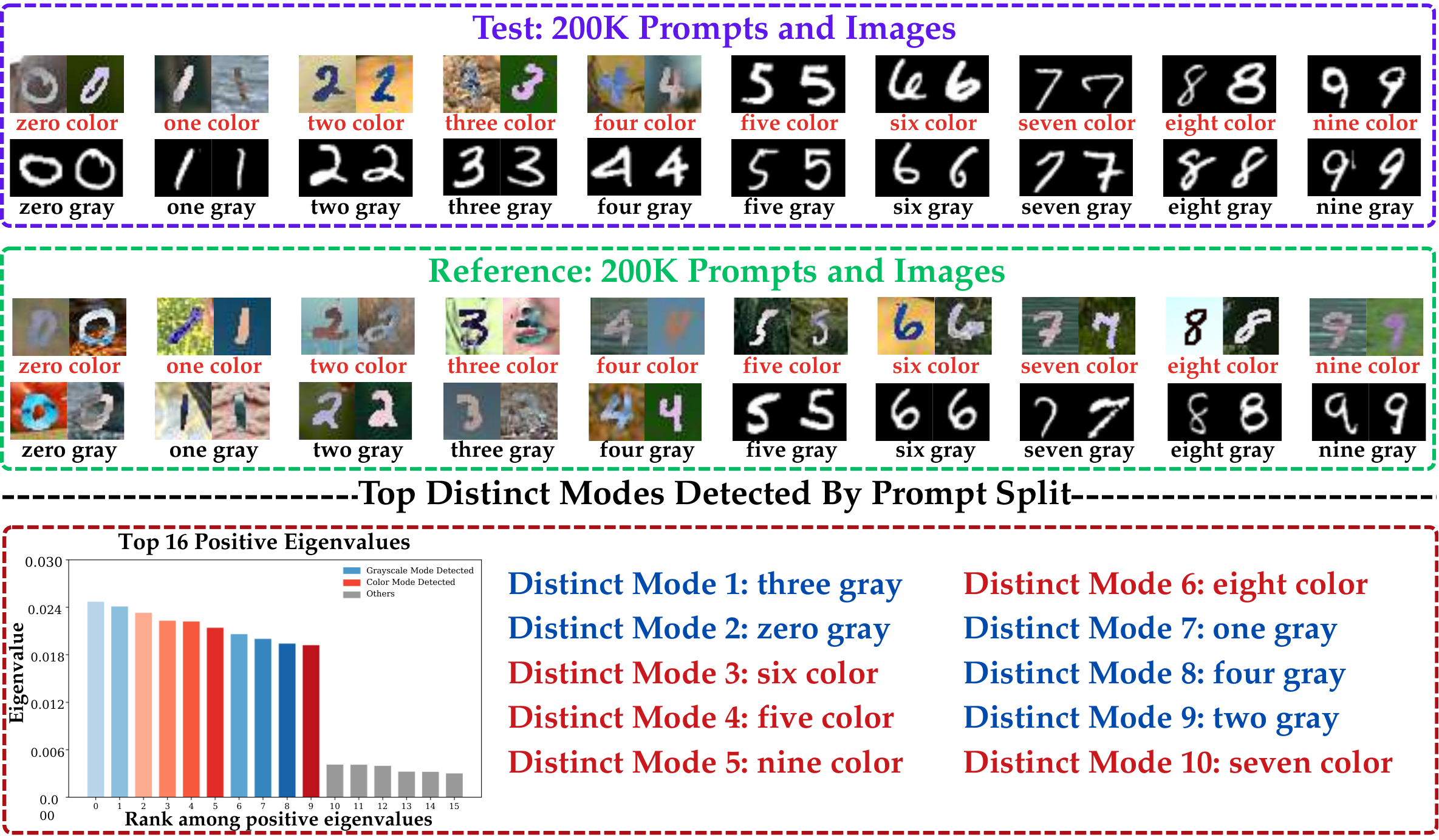}
    \caption{PromptSplit identified top clusters of prompts with distinct images. Top: 20 clusters of prompts with sample images for test and reference dataset. Bottom: Top 16 eigenvalues showing top 10 disagreement causing prompts.}
    \label{fig:mnist}
\end{figure*}

\begin{equation*}
 \widetilde{\phi}_r(t,x) = \bigl[\cos(\omega_{t,i}^\top t + \omega_{x,i}^\top x),\sin(\omega_{t,i}^\top t + \omega_{x,i}^\top x)\bigr]_{i=1}^{r/2} \in\mathbb{R}^{r}.   
\end{equation*}
In the following, we prove that the eigendirections of the random projection $\widetilde{\Lambda_r}_{X,Y|T}$ will lead to an $O(\frac{1}{r^2})$-accurate approximation of the eigendirections of the  matrix ${\Lambda_r}_{X,Y|T}$.  

\begin{theorem}
\label{thm:rp}
Consider the kernel covariance difference matrix $K_{X,\eta Y|T}$ and the proxy kernel covariance difference matrix $\widetilde{K_r}_{X,\eta Y|T}$ of the random projection approach with dimension $r$. Then, for every $\delta>0$, the following holds with probability at least $1-\delta$,
\begin{align*}
&\bigl\|\boldsymbol{\lambda}(K_{X,\eta Y|T})-\boldsymbol{\lambda}(\widetilde{K_r}_{X,\eta Y|T})\bigr\|_2 \le
   \sqrt{\frac{8+8\eta^2}{r}}
\bigl(1+\sqrt{2\log\tfrac{1}{\delta}}\bigr). 
\end{align*}
\end{theorem}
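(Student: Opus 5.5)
Throughout, I take $\widetilde{K_r}_{X,\eta Y|T}$ to be the block matrix \eqref{eq:kblock} with the exact product kernel $k_T(t,t')k_X(x,x')$ replaced by the proxy kernel $\widetilde{k}_r\bigl([t,x],[t',x']\bigr)=\widetilde{\phi}_r(t,x)^\top\widetilde{\phi}_r(t',x')$. In the random Fourier feature version of $\widetilde{\phi}_r$, this proxy kernel is an unbiased Monte Carlo average of $r/2$ i.i.d. terms, each bounded in absolute value by $1$. By Proposition~\ref{prop:hadamard} (applied to both the exact and the proxy feature maps), it suffices to compare the spectra of the two $(n+m)\times(n+m)$ block matrices.

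\textbf{Step 1: Hoffman--Wielandt.} Both block matrices are similar to symmetric matrices via $D=\mathrm{diag}(I_n,\sqrt{\eta}\,I_m)$. Conjugating by $D$ turns the off-diagonal blocks into $\pm\sqrt{\eta/(nm)}\,(\cdot)$, which is the standard symmetrization $\Phi^\top\Phi\,\mathrm{diag}(1/n,-\eta/m)$. Rather than symmetrizing exactly, I will work with $S=\Phi^\top \Phi$, where $\Phi$ collects the scaled features $\phi_\otimes(t_i,x_i)/\sqrt{n}$ and $\sqrt{\eta}\,\phi_\otimes(t'_j,y_j)/\sqrt{m}$, together with the signature $J=\mathrm{diag}(I_n,-I_m)$. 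The nonzero spectrum of $JS$ coincides with that of $\widehat{\Lambda}$, and also with that of the symmetric matrix $J^{1/2}$-type form $\Phi J\Phi^\top$ in feature space. The cleanest route is therefore to apply Hoffman--Wielandt to symmetric matrices $M=S^{\circ}$ with entries $\epsilon_i\epsilon_j s_i s_j k(z_i,z_j)$, where $s_i\in\{1/\sqrt{n},\sqrt{\eta/m}\}$ and $\epsilon_i=\pm1$ appropriately; I will check that this matrix is similar to the block matrix in the statement. Hoffman--Wielandt then gives
\[
\|\boldsymbol{\lambda}(K)-\boldsymbol{\lambda}(\widetilde{K}_r)\|_2\le \|M-\widetilde{M}_r\|_F .
\]

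\textbf{Step 2: bound the Frobenius error in expectation.} Each entry of $M-\widetilde{M}_r$ is $s_is_j\bigl(k-\widetilde{k}_r\bigr)$, a mean-zero average of $r/2$ bounded terms, so its variance is at most $O(1/r)$. Summing over the blocks with weights $s_i^2s_j^2$ gives total weights $1$ for the $XX$ block, $2\eta$ for the two cross blocks, and $\eta^2$ for the $YY$ block. Hence $\mathbb{E}\|M-\widetilde{M}_r\|_F^2\le c(1+\eta)^2/r\le 2c(1+\eta^2)/r$, and tracking constants should produce the $8+8\eta^2$. Jensen's inequality then yields $\mathbb{E}\|\cdot\|_F\le\sqrt{(8+8\eta^2)/r}$.

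\textbf{Step 3: concentration.} View $\|M-\widetilde{M}_r\|_F$ as a function of the $r/2$ independent frequency pairs $(\omega_{t,\ell},\omega_{x,\ell})$. Replacing one pair changes $\widetilde{M}_r$ by a matrix whose Frobenius norm is at most $(4/r)\bigl(\sum_{i,j}s_i^2s_j^2\bigr)^{1/2}\le 4(1+\eta)/r$. McDiarmid's bounded-differences inequality then gives a deviation of $\sqrt{\sum_\ell c_\ell^2\log(1/\delta)/2}=O\bigl((1+\eta)\sqrt{\log(1/\delta)/r}\bigr)$, which matches the form $\sqrt{(8+8\eta^2)/r}\,\sqrt{2\log(1/\delta)}$ after adjusting constants.

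\textbf{Main obstacle.} The delicate step is Step 1. Because of the $-\eta$ weights and the asymmetric scaling, the block matrix is not symmetric, so Hoffman--Wielandt does not apply to it directly. I must exhibit an explicit similarity, for example conjugation by $\mathrm{diag}(I,\sqrt{\eta}I)$ combined with handling the sign through the indefinite form, that maps it to a symmetric matrix whose entries are exactly the weighted kernel entries. If no exact symmetric form exists, the fallback is to compare eigenvalues of $\widehat{\Lambda}$ and $\widetilde{\Lambda}_r$ directly in a common lifted feature space, concatenating the exact and random features so that both are symmetric operators there. One can then bound their difference in Hilbert--Schmidt norm, which equals the same weighted kernel-error sum.
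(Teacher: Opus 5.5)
\textbf{The concentration steps are sound; Step 1 is a genuine gap.} Your Steps 2 and 3 are essentially the paper's argument. The paper writes $\widetilde{K_r}_{X,\eta Y|T}=\frac{1}{r}\sum_\ell K^{(\ell)}_{X,\eta Y|T}$ as an average of i.i.d.\ single-feature block matrices with mean $K_{X,\eta Y|T}$. It bounds each centered term by $\sqrt{8+8\eta^2}$ in Frobenius norm and applies a Hilbert-space Hoeffding inequality. That inequality is itself proved by your Jensen-plus-McDiarmid route, and your constants fit since $(1+\eta)^2\le 2(1+\eta^2)$.

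You rightly flag Step 1 as the crux. The block matrix is not symmetric: its off-diagonal blocks carry $+\frac{1}{\sqrt{nm}}$ and $-\frac{\eta}{\sqrt{nm}}$. The paper's proof applies Hoffman--Wielandt after simply asserting that both block matrices are symmetric, so it does not resolve this point either. Your repair does not work, for three reasons.
\begin{itemize}
\item Conjugating by $\mathrm{diag}(I_n,\sqrt{\eta}\,I_m)$ gives $JS$, which is not symmetric.
\item Your $M$ with entries $\epsilon_i\epsilon_j s_is_jk(z_i,z_j)$ is $ESE$ with $E=\mathrm{diag}(\epsilon)$ orthogonal. It is therefore orthogonally similar to the PSD matrix $S$, and its spectrum is nonnegative. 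So it is never similar to $JS$ when $\widehat{\Lambda}_{X,Y|T}$ has a negative eigenvalue.
\item The lifted fallback also fails. With $\widehat{\Lambda}$ and $\widetilde{\Lambda}_r$ on orthogonal summands, the Hilbert--Schmidt norm of their difference is $(\|\widehat{\Lambda}\|_{HS}^2+\|\widetilde{\Lambda}_r\|_{HS}^2)^{1/2}$, which does not decay in $r$.
\end{itemize}

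\textbf{Frobenius control alone cannot give the stated rate.} No generic perturbation inequality turns a Frobenius bound into the claimed linear eigenvalue bound. Take $n=m=1$, $\eta=1$, with PSD Grams $S=\begin{bmatrix}1&1\\1&1\end{bmatrix}$ and $\widetilde S=\begin{bmatrix}1&1-\epsilon\\1-\epsilon&1\end{bmatrix}$. These give the block matrix $K=\begin{bmatrix}1&1\\-1&-1\end{bmatrix}$, which is nilpotent and hence not similar to any symmetric matrix. The proxy $\widetilde K$ has eigenvalues $\pm\sqrt{2\epsilon-\epsilon^2}$. Thus $\|K-\widetilde K\|_F=\sqrt{2}\,\epsilon$ while $\|\boldsymbol{\lambda}(K)-\boldsymbol{\lambda}(\widetilde K)\|_2\approx 2\sqrt{\epsilon}$. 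So Hoffman--Wielandt is false for these matrices, and Steps 2--3 alone can give at best an $r^{-1/4}$-type rate.

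The symmetric matrix $S^{1/2}JS^{1/2}$ does share the characteristic polynomial of $JS$. But comparing it with $\widetilde S^{1/2}J\widetilde S^{1/2}$ requires controlling $S^{1/2}-\widetilde S^{1/2}$, which is not Lipschitz at singular $S$. Reaching $O(1/\sqrt{r})$ needs an extra idea that uses the structure of the random-feature errors, not just their Frobenius size. For RFF, for instance, $\mathrm{Var}(\widetilde k_{ij})\le 2(1-k_{ij})$ per feature, so the error vanishes in exactly the near-degenerate regime this example exploits.
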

\begin{proof}
We defer the proof to the Appendix.
\end{proof}

\section{PromptSplit Guidance for Text-Guided Diffusion Models}
\label{sec: maintext guidance}
As discussed in the previous section, PromptSplit can identify differences in text-guided generative models. One application of this framework is to use it for guiding conditional diffusion models to align with a reference dataset, where we specifically focus on text-conditioned latent diffusion models \cite{rombach2022ldm}.

Let $\mathrm{PromptSplit}(x_{1:n}, x_{1:n}\vert t_{1:n})$ denote the $\| \widehat{\Lambda}_{X,Y|T} \|_F^2$ in the latent space $\mathcal{Z}$ and $\mathrm{Joint\text{-}Diversity}(x_{1:n}, t_{1:n})$ denote $\| C_{X\otimes T} \|_F^2$ which promotes diversity between samples with correlated prompts. At  step $\tau$, we augment the classifier-free update \citep{Ho2022ClassifierFreeDG} with an ascent step where $\eta_\tau>0$ is the guidance scale at iteration $\tau$, and $\rho$ is the guidance scale of the diversity term:
\begin{align}\label{eq:sp-guidance}
\boldsymbol{z}^{(n)}_{\tau-1}
\;\leftarrow&\;
\mathrm{Sampler}\bigl(\boldsymbol{z}^{(n)}_\tau,\,
\hat{\epsilon}_\theta(\boldsymbol{z}^{(n)}_\tau,\tau,t_n)\bigr)\\
\; &-\eta_\tau\,\Bigl(\nabla_{\boldsymbol{z}^{(n)}} \mathrm{PromptSplit}(x_{1:n}, y_{1:n}\vert t_{1:n}),\nonumber\\
&+\rho\,\nabla_{\boldsymbol{z}^{(n)}} \mathrm{Joint\text{-}Diversity}(x_{1:n}, t_{1:n})\Bigl)\nonumber
\end{align}


%% file: sec/4_numerical.tex
\begin{figure*}[t]
    \centering
    \includegraphics[width=\linewidth]{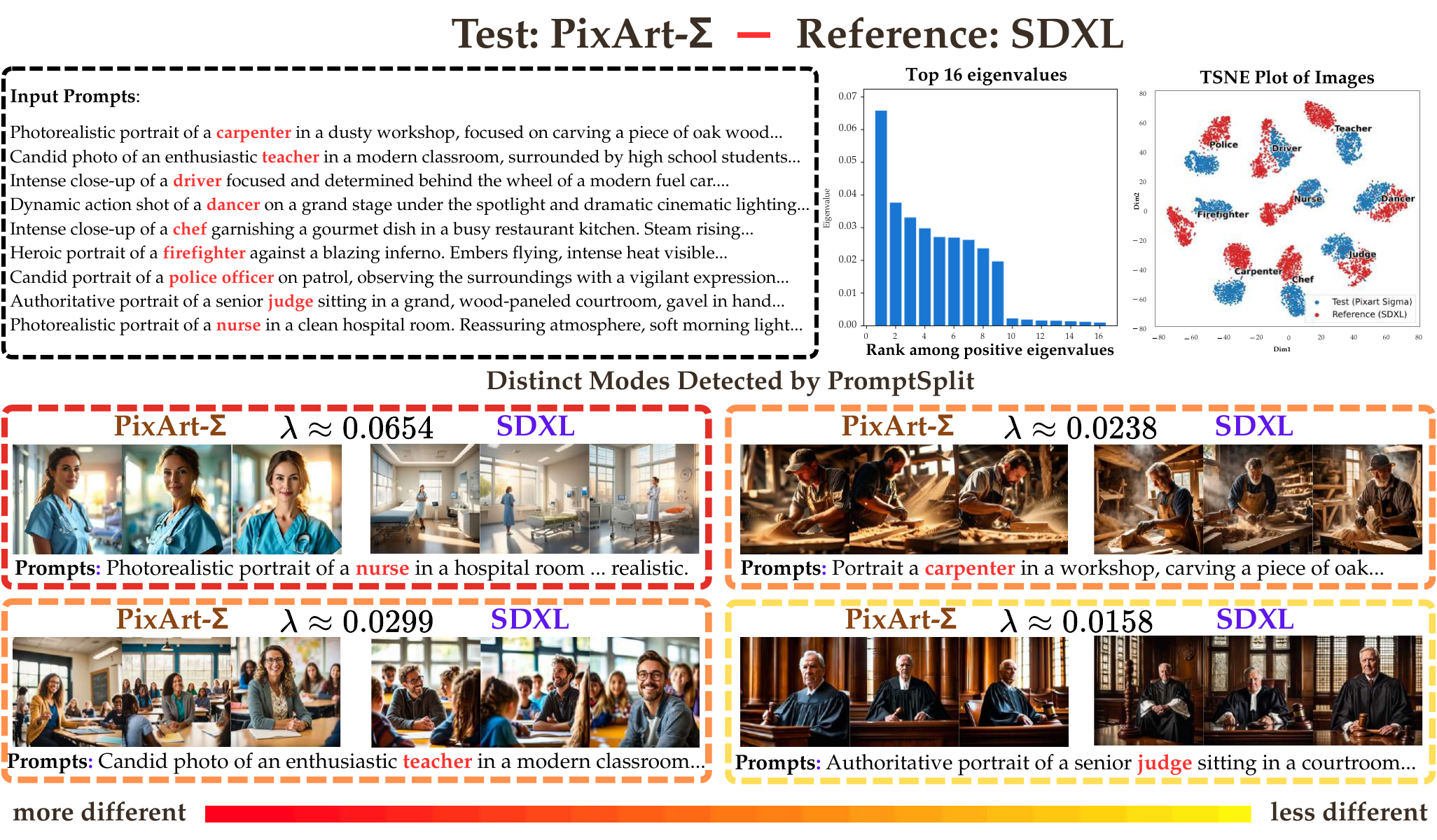}
    \caption{PromptSplit uncovers occupation-based divergences between PixArt-$\Sigma$ (test model) and SDXL (reference model). (Top middle) Largest eigenvalues barplot highlighting top nine distinct clusters. (Top right) t-SNE projection of images embeddings, revealing occupational clusters. (Bottom) Representative generated images for different $\lambda$ values.}
    \label{fig:sdxl_pixart}
\end{figure*}

\begin{figure*}[h]
    \centering
    \includegraphics[width=\linewidth]{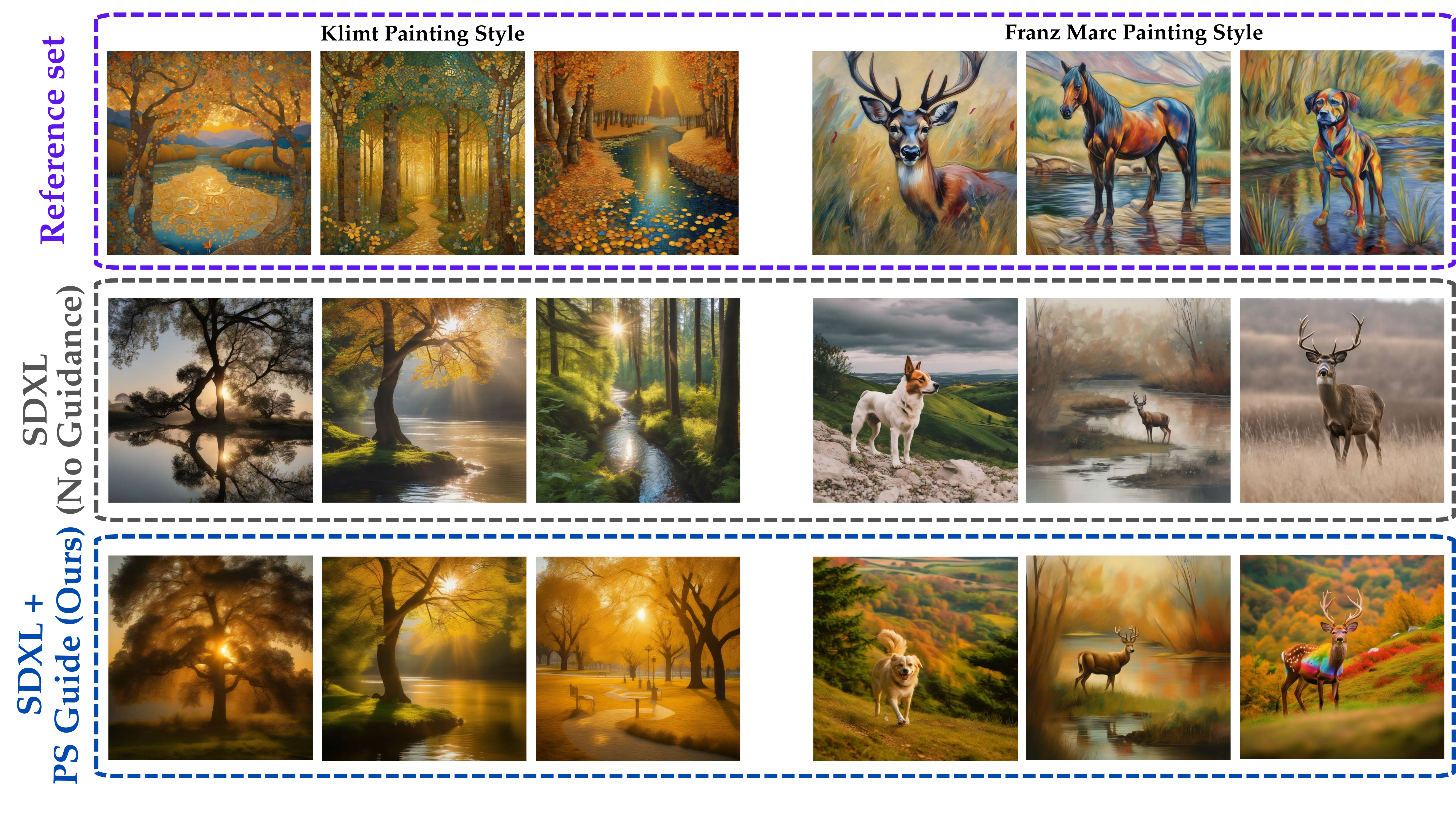}\vspace{-3mm}
    \caption{Qualitative comparison of reference set and PS-guided image generation with SDXL.}
    \label{fig:guidance_sdxl}
\end{figure*}


We evaluated PromptSplit on text guided image generation and text-to-text generative models (LLMs). In the following we provide experimental settings and comparison of kernel based PromptSplit vs. scalable random projection version of PromptSplit showing that the method was able to distinguish the disagreement in generative models.

\textbf{Models.}
In our experiments on text-to-image models we used state-of-the-art generative models, Stable Diffusion XL \cite{sdxl}, PixArt-$\Sigma$ \cite{chen2024pixartsigmaweaktostrongtrainingdiffusion}, and Kandinsky \cite{arkhipkin2024kandinsky3}. For text-to-text models we used Llama~3.2 \cite{llama}, Gemma~3 \cite{gemma3}, DeepSeek-r1 \cite{deepseek}, and Qwen~3 \cite{qwen3}.

\textbf{Datasets.} We used standard image datasets including MNIST \cite{mnist}, MNIST-M \cite{mnistm} (colored background MNIST). For text-to-image scenarios we use MS-COCO validation set \cite{mscoco}, and for text-to-text models we used NQ-Open \cite{nq_open} (Open-domain QA from Natural Questions).

\textbf{Experimental Settings.} We approximated the Gaussian kernel using r = 3000 random Fourier features to compute the PromptSplit differential kernel covariance matrix. The bandwidth $\sigma$ was determined according to the method in \cite{ospanov2024fkea}, selecting embedding bandwidths such that the gap between the largest eigenvalues was less than 0.01. We used DinoV2-giant \cite{oquab2023dinov2} to embed images and Sentence-Bert \cite{sbert} for texts. Full details of the PromptSplit and Random Projection PromptSplit algorithms appear in Algorithms \ref{alg:promptsplit} and \ref{alg:rp-promptsplit}. Experiments were run on four RTX A5000 and two RTX 4090 GPUs.


\subsection{Validation of PromptSplit in settings with known groundtruth}

\textbf{MNIST-M Color Disagreement.} We used 20 prompts and sampled 1000 images per prompt for colored and grayscale digits in MNIST. To control the setting differences, we sampled grayscale digits for colored prompts 5 to 9 in test dataset and colored digits for grayscale prompt 0 to 4 in reference dataset as shown in figure \ref{fig:mnist}. We used random projection version of PromptSplit due to the large size of datasets which successfully identified the top ten prompts with distinct images.


In the Appendix, we further present additional numerical results validating the outputs of PromptSplit in scenarios where we know the ground truth for text-to-image models (figure \ref{fig:sdxl_controlled}) and LLMs (figures \ref{fig:llama_gemma}, \ref{fig:deepseek}). For both text-to-image models and LLM, we studied wether the model can successfully capture the disagreement and ignore the similarities. We also validate the model in synthetic setting (figure \ref{fig:synthetic}).

\subsection{Application of PromptSplit to T2I Models}

\textbf{Occupation Level Disagreement Identification.}
We generated 500 images for nine different occupation prompt clusters with SDXL and PixArt-$\Sigma$. PromptSplit identified clusters with most disagreements (Nurse), some biases in age or gender (Carpenter and Teacher), and minor differences (Judge) due to the magnitude of the resulting eigenvalue. the results are further supported by the t-SNE scatter plot of images
embeddings.

We refer the application of PromptSplit for different models on MS-COCO captions to appendix (figures \ref{fig:pixart_sdxl_mscoco}, \ref{fig:kandinsky_mscoco_mscoco}, \ref{fig:sdxl_mscoco_mscoco}, \ref{fig:mscoco_sdxl_mscoco}, \ref{fig:pixart_mscoco_mscoco}, \ref{fig:mscoco_pixart_mscoco}).

\subsection{Application of PromptSplit to LLMs}

\textbf{NQ-Open Experiment.} We generated answers for 20000 NQ-Open validation questions with Qwen~3 as test and Gemma~3  as Reference \ref{fig:introduction}. We applied PromptSplit to identify top cluster prompts (questions about actors and US. presidents) with different generated answers.

We refer to appendix for more comparison between other LLMs pairs and sampled similar cluster for sanity check (figures \ref{fig:gemma_llama_nq}, \ref{fig:llama_nq_nq}, \ref{fig:nq_llama_nq}).

\subsection{PromptSplit Guidance for Distribution Matching in LDMs.}
To further study one of the applications of PromptSplit, we use PromptSplit to guide the diffusion process in the latent space of the latent diffusion models as formulated in Section~\ref{sec: maintext guidance}. We used 100 reference samples for each of the painting styles and guided the Stable Diffusion-XL to align the distribution of generated samples to the reference set. As shown in Figure~\ref{fig:guidance_sdxl}, adding PS guidance improves the alignment of SDXL with the reference dataset. For the additional results, we refer to the Appendix.

\subsection{Ablation Studies}
We refer the reader to the Appendix (figures \ref{fig:r_ablation1}, \ref{fig:r_ablation2}, \ref{fig:n_ablation1}, \ref{fig:n_ablation2}) for additional ablation studies examining the effect of (i) the number of samples and (ii) the number of random Fourier features used to approximate the Gaussian kernel.


%% file: sec/5_conclusion.tex
\begin{table}[t]
\centering
\caption{Runtime comparison (in seconds) of different PromptSplit algorithms on the MS-COCO val 2014 dataset.}
\label{tab:mscoco-comparison-time}
\small  
\begingroup
\setlength{\tabcolsep}{25pt}  
\begin{tabular}{lcccc}  
\toprule
\multirow{2}{*}{Sample Size} & \multirow{2}{*}{Kernel Method} & \multicolumn{3}{c}{Random Projection} \\
\cmidrule(lr){3-5}
 & & (r = 1000) & (r = 2000) & (r = 3000) \\
\midrule
n = 1000 & 0.856 & 4.186 & 43.674 & 88.156 \\
n = 2000 & 6.322 & 4.660 & 43.995 & 147.690 \\
n = 5000 & 111.890 & 5.391 & 40.290 & 162.831 \\
n = 10000 & 902.818 & 7.830 & 64.162 & 158.452 \\
n = 20000 & --- & 12.033 & 77.393 & 188.620 \\
n = 30000 & --- & 17.765 & 72.271 & 206.996 \\
\bottomrule
\end{tabular}
\endgroup
\end{table}

In this work, we introduced PromptSplit, a prompt-aware spectral framework for comparing prompt-guided generative models through the eigenspectrum of joint prompt–output kernel covariance differences. By coupling prompt and output representations via tensor-product embeddings, PromptSplit enables a structured analysis of where and how model behaviors diverge as a function of the input prompt, moving beyond aggregate quality metrics toward prompt-conditioned comparison. We proposed a kernel formulation that admits an efficient spectral implementation, provided theoretical guarantees for a scalable random-projection approximation, and demonstrated the effectiveness of the approach on both synthetic settings and real text-to-image and text-to-text generative models.

PromptSplit is designed as an embedding-based, second-order spectral method, and its analysis reflects the information captured by the chosen prompt and output representations. While this design enables scalability and interpretability, it does not aim to characterize all higher-order aspects of conditional generative behavior. The random-projection scheme introduces a standard accuracy–efficiency trade-off that is controlled by the projection dimension and can be tuned in practice. The current formulation focuses on pairwise model comparison under matched prompt distributions; extending the framework to multi-model comparisons, adaptive projection strategies, or evolving model families represents natural directions for future work.

%% file: sec/6_appendix_proofs.tex
\subsection{Proof of Proposition~\ref{prop:hadamard}}
We start by writing the empirical covariance--difference operator explicitly.
By definition,
\[
\widehat{\Lambda}_{X,Y\mid T}
=
\frac{1}{n}\sum_{i=1}^n 
\phi_\otimes(t_i,x_i)\phi_\otimes(t_i,x_i)^\top
-
\frac{\eta}{m}\sum_{j=1}^m
\phi_\otimes(t'_j,y_j)\phi_\otimes(t'_j,y_j)^\top .
\]

Consider the following stacked feature matrices:
\[
\Psi_X
=
\begin{bmatrix}
\phi_\otimes(t_1,x_1)^\top\\
\vdots\\
\phi_\otimes(t_n,x_n)^\top
\end{bmatrix}
\in \mathbb{R}^{n\times D},
\qquad
\Psi_Y
=
\begin{bmatrix}
\phi_\otimes(t'_1,y_1)^\top\\
\vdots\\
\phi_\otimes(t'_m,y_m)^\top
\end{bmatrix}
\in \mathbb{R}^{m\times D},
\]
where $D=d_T d_X$.
Then, we can write
\[
\widehat{\Lambda}_{X,Y\mid T}
=
\Psi^\top
\begin{bmatrix}
\frac{1}{n}I_n & 0\\
0 & -\frac{\eta}{m}I_m
\end{bmatrix}
\Psi,
\qquad
\Psi
=
\begin{bmatrix}
\Psi_X\\
\Psi_Y
\end{bmatrix}.
\]

Next, consider the corresponding kernel matrix
\[
K_{X,\eta Y\mid T}
=
\begin{bmatrix}
\frac{1}{n}K_{TT}\odot K_{XX}
&
\frac{1}{\sqrt{nm}}K_{TT'}\odot K_{XY}
\\[1mm]
-\frac{\eta}{\sqrt{nm}}K_{TT'}^\top\odot K_{XY}^\top
&
-\frac{\eta}{m}K_{T'T'}\odot K_{YY}
\end{bmatrix}.
\]

Using the tensor-product identity $
\langle \phi_T(t)\otimes\phi_X(x),\,
        \phi_T(t')\otimes\phi_X(x')\rangle
=
k_T(t,t')\,k_X(x,x')$, we observe that
\[
K_{X,\eta Y\mid T}
=
\begin{bmatrix}
\Psi_X\\
\Psi_Y
\end{bmatrix}
\begin{bmatrix}
\frac{1}{n}I_n & 0\\
0 & -\frac{\eta}{m}I_m
\end{bmatrix}
\begin{bmatrix}
\Psi_X\\
\Psi_Y
\end{bmatrix}^{\!\top}
=
\Psi
\begin{bmatrix}
\frac{1}{n}I_n & 0\\
0 & -\frac{\eta}{m}I_m
\end{bmatrix}
\Psi^\top .
\]

Hence, the following equations hold
\[
\widehat{\Lambda}_{X,Y\mid T}
=
\Psi^\top D \Psi,
\qquad
K_{X,\eta Y\mid T}
=
\Psi D \Psi^\top ,
\quad
D=
\mathrm{diag}\!\left(\tfrac{1}{n}I_n,-\tfrac{\eta}{m}I_m\right).
\]

It is a standard linear-algebra fact that the matrices
$\Psi^\top D \Psi$ and $\Psi D \Psi^\top$ have the same nonzero eigenvalues (due to flipped multiplication order).
Therefore, $\widehat{\Lambda}_{X,Y\mid T}$ and $K_{X,\eta Y\mid T}$
share the same nonzero spectrum. Finally, let $u=[u_{1:n};u_{(n+1):(n+m)}]$ be an eigenvector of
$K_{X,\eta Y\mid T}$ with eigenvalue $\lambda\neq 0$.
Define vector $v$ as follows:
\[
v
=
\Psi^\top u
=
\sum_{i=1}^n u_i\,\phi_\otimes(t_i,x_i)
+
\sum_{j=1}^m u_{n+j}\,\phi_\otimes(t'_j,y_j).
\]
Then, we have the following
\[
\widehat{\Lambda}_{X,Y\mid T}v
=
\Psi^\top D \Psi \Psi^\top u
=
\Psi^\top D (K_{X,\eta Y\mid T}u)
=
\lambda\,\Psi^\top u
=
\lambda v,
\]
which proves that $v$ is an eigenvector of $\widehat{\Lambda}_{X,Y\mid T}$
associated with the same eigenvalue $\lambda$. This completes the proof.

\subsection{Proof of Theorem~\ref{thm:rp}}

First, we show the unbiasedness and boundedness of the random-feature product kernel. By Bochner's theorem, for shift-invariant $k_T(t,t')=\kappa_T(t-t')$ and
$k_X(x,x')=\kappa_X(x-x')$, there exist spectral measures $\mu_T,\mu_X$ such that
\[
k_T(t,t')=\mathbb{E}_{\omega_t\sim\mu_T}\big[\cos(\omega_t^\top(t-t'))\big],
\qquad
k_X(x,x')=\mathbb{E}_{\omega_x\sim\mu_X}\big[\cos(\omega_x^\top(x-x'))\big].
\]
Using independence between $\omega_{t,\ell}$ and $\omega_{x,\ell}$, and the identity
$\cos(a)\cos(b)=\tfrac{1}{2}\cos(a+b)+\tfrac{1}{2}\cos(a-b)$, we may equivalently use the
single cosine feature  to represent the product kernel:
\begin{align}
\mathbb{E}\Big[g_\ell\bigl((t,x),(t',x')\bigr)\Big]
&=
\mathbb{E}_{\omega_t,\omega_x}\Big[\cos\big(\omega_t^\top(t-t')+\omega_x^\top(x-x')\big)\Big]
\nonumber\\
&=
\mathbb{E}_{\omega_t}\big[\cos(\omega_t^\top(t-t'))\big]\;
\mathbb{E}_{\omega_x}\big[\cos(\omega_x^\top(x-x'))\big]
\label{eq:unbiased-prod}\\
&=
k_T(t,t')\,k_X(x,x').
\nonumber
\end{align}
Moreover, for every $\ell$ and every pair of inputs,
\begin{equation}
\label{eq:bounded-g}
\bigl|g_\ell\bigl((t,x),(t',x')\bigr)\bigr|\le 1,
\qquad
\bigl|\widetilde{k}_\otimes\bigl((t,x),(t',x')\bigr)\bigr|\le 1.
\end{equation}

Next, we decompose the matrix as an average of i.i.d.\ bounded matrices. To do this, define $N:=n+m$, and for every $\ell\in[r]$, consider the \emph{single-feature} block matrix
$K^{(\ell)}_{X,\eta Y|T}\in\mathbb{R}^{N\times N}$ by
\begin{equation}
\label{eq:Kell}
K^{(\ell)}_{X,\eta Y|T}
=\hspace{-1.5mm}
\begin{bmatrix}
\frac{1}{n}G^{(\ell)}_{XX}
& \frac{1}{\sqrt{nm}}G^{(\ell)}_{XY}\vspace{2mm}\\
-\frac{\eta}{\sqrt{nm}}(G^{(\ell)}_{XY})^\top
& -\frac{\eta}{m}G^{(\ell)}_{YY}
\end{bmatrix},
\end{equation}
where the entries are
\begin{align*}
\bigl(G^{(\ell)}_{XX}\bigr)_{ij}
&:=g_\ell\bigl((t_i,x_i),(t_j,x_j)\bigr),
\qquad i,j\in[n],
\\
\bigl(G^{(\ell)}_{YY}\bigr)_{jj'}
&:=g_\ell\bigl((t'_j,y_j),(t'_{j'},y_{j'})\bigr),
\qquad j,j'\in[m],
\\
\bigl(G^{(\ell)}_{XY}\bigr)_{ij}
&:=g_\ell\bigl((t_i,x_i),(t'_j,y_j)\bigr),
\qquad i\in[n],\ j\in[m].
\end{align*}
By construction of $\widetilde{k}_\otimes$, the proxy matrix is the
average of these blocks:
\begin{equation}
\label{eq:avg}
\widetilde{K_r}_{X,\eta Y|T}
=\frac{1}{r}\sum_{\ell=1}^r K^{(\ell)}_{X,\eta Y|T}.
\end{equation}
Taking expectation and using~\eqref{eq:unbiased-prod} entrywise shows that
\begin{equation}
\label{eq:exp}
\mathbb{E}\Big[K^{(\ell)}_{X,\eta Y|T}\Big]
=
K_{X,\eta Y|T}.
\end{equation}

From~\eqref{eq:bounded-g}, every entry of $G^{(\ell)}_{XX},G^{(\ell)}_{YY},G^{(\ell)}_{XY}$
has magnitude at most $1$. Hence,
\begin{align}
\left\|\frac{1}{n}G^{(\ell)}_{XX}\right\|_F^2
&\le
\sum_{i=1}^n\sum_{j=1}^n \left(\frac{1}{n}\right)^2
=1,
\label{eq:frob-xx}\\
\left\|\frac{1}{\sqrt{nm}}G^{(\ell)}_{XY}\right\|_F^2
&\le
\sum_{i=1}^n\sum_{j=1}^m \left(\frac{1}{\sqrt{nm}}\right)^2
=1,
\label{eq:frob-xy}\\
\left\|\frac{\eta}{m}G^{(\ell)}_{YY}\right\|_F^2
&\le
\sum_{j=1}^m\sum_{j'=1}^m \left(\frac{\eta}{m}\right)^2
=\eta^2.
\label{eq:frob-yy}
\end{align}
Using the matrix formulation, we obtain the following
\begin{align}
\bigl\|K^{(\ell)}_{X,\eta Y|T}\bigr\|_F^2
&\le
\left\|\frac{1}{n}G^{(\ell)}_{XX}\right\|_F^2
+
\left\|\frac{1}{\sqrt{nm}}G^{(\ell)}_{XY}\right\|_F^2
+
\left\|\frac{\eta}{\sqrt{nm}}(G^{(\ell)}_{XY})^\top\right\|_F^2
+
\left\|\frac{\eta}{m}G^{(\ell)}_{YY}\right\|_F^2
\nonumber\\
&\le
1+1+\eta^2+\eta^2
=
2(1+\eta^2)
\label{eq:frob-Kell}
\end{align}
Therefore,
\begin{equation}
\label{eq:frob-Kell-simple}
\bigl\|K^{(\ell)}_{X,\eta Y|T}\bigr\|_F \le \sqrt{2+2\eta^2}
\end{equation}
The same argument applied to the expectation in~\eqref{eq:exp} yields
$\|K_{X,\eta Y|T}\|_F\le \sqrt{2+2\eta^2}$, and thus, by the triangle inequality,
\begin{equation}
\label{eq:centered-bound}
\left\|K^{(\ell)}_{X,\eta Y|T}-K_{X,\eta Y|T}\right\|_F
\le
\bigl\|K^{(\ell)}_{X,\eta Y|T}\bigr\|_F+\bigl\|K_{X,\eta Y|T}\bigr\|_F
\le \sqrt{8+8\eta^2}
\end{equation}

Now, we consider the Hilbert space $(\mathbb{R}^{N\times N},\langle\cdot,\cdot\rangle_F)$. Consider the centered random matrices
\[
X_\ell:=K^{(\ell)}_{X,\eta Y|T}-K_{X,\eta Y|T},
\qquad \ell=1,\ldots,r.
\]
Then, $X_1,\ldots,X_r$ are i.i.d. random vectors satisfying $\mathbb{E}[X_\ell]=0$, and by~\eqref{eq:centered-bound},
$\|X_\ell\|_F\le 4$ almost surely.
A Hoeffding inequality for Hilbert-space-valued random vectors
(\citep[Lemma~11]{sutherland2018efficient}) implies that, for every $\delta\in(0,1)$,
with probability at least $1-\delta$,
\begin{equation}
\label{eq:hoeff}
\left\|\frac{1}{r}\sum_{\ell=1}^r X_\ell\right\|_F
\le
\frac{4}{\sqrt{r}}
\left(1+\sqrt{2\log\frac{1}{\delta}}\right).
\end{equation}
Since $\frac{1}{r}\sum_{\ell=1}^r X_\ell
=\widetilde{K_r}_{X,\eta Y|T}-K_{X,\eta Y|T}$ by~\eqref{eq:avg} and $\mathbb{E}[K^{(\ell)}]=K$,
we obtain
\begin{equation}
\label{eq:frob-dev}
\bigl\|\widetilde{K_r}_{X,\eta Y|T}-K_{X,\eta Y|T}\bigr\|_F
\le
\sqrt{\frac{8+8\eta^2}{r}}
\left(1+\sqrt{2\log\tfrac{1}{\delta}}\right).
\end{equation}

Since both $K_{X,\eta Y|T}$ and $\widetilde{K_r}_{X,\eta Y|T}$ are symmetric, the
Hoffman--Wielandt inequality gives
\begin{equation}
\label{eq:hw}
\bigl\|\boldsymbol{\lambda}(K_{X,\eta Y|T})
-\boldsymbol{\lambda}(\widetilde{K_r}_{X,\eta Y|T})\bigr\|_2
\le
\bigl\|K_{X,\eta Y|T}-\widetilde{K_r}_{X,\eta Y|T}\bigr\|_F.
\end{equation}
Combining these two inequalities proves the stated inequality.

%% file: sec/6_appendix_results.tex
\subsection{Image Style Disagreement.}
In figure \ref{fig:introduction} we generated three clusters of prompts and sampled 1000 images per cluster with different styles. For test dataset we used oil painting style for cityscape images, and pop art style for forests which differs from photo realistic style images generated for reference. We also generated photo realistic mountain images for both test and reference datasets. PromptSplit kernel-based algorithm successfully identified the two distinct modes in style with significant larger eigenvalues.

\begin{figure*}
    \centering
    \includegraphics[width=\linewidth]{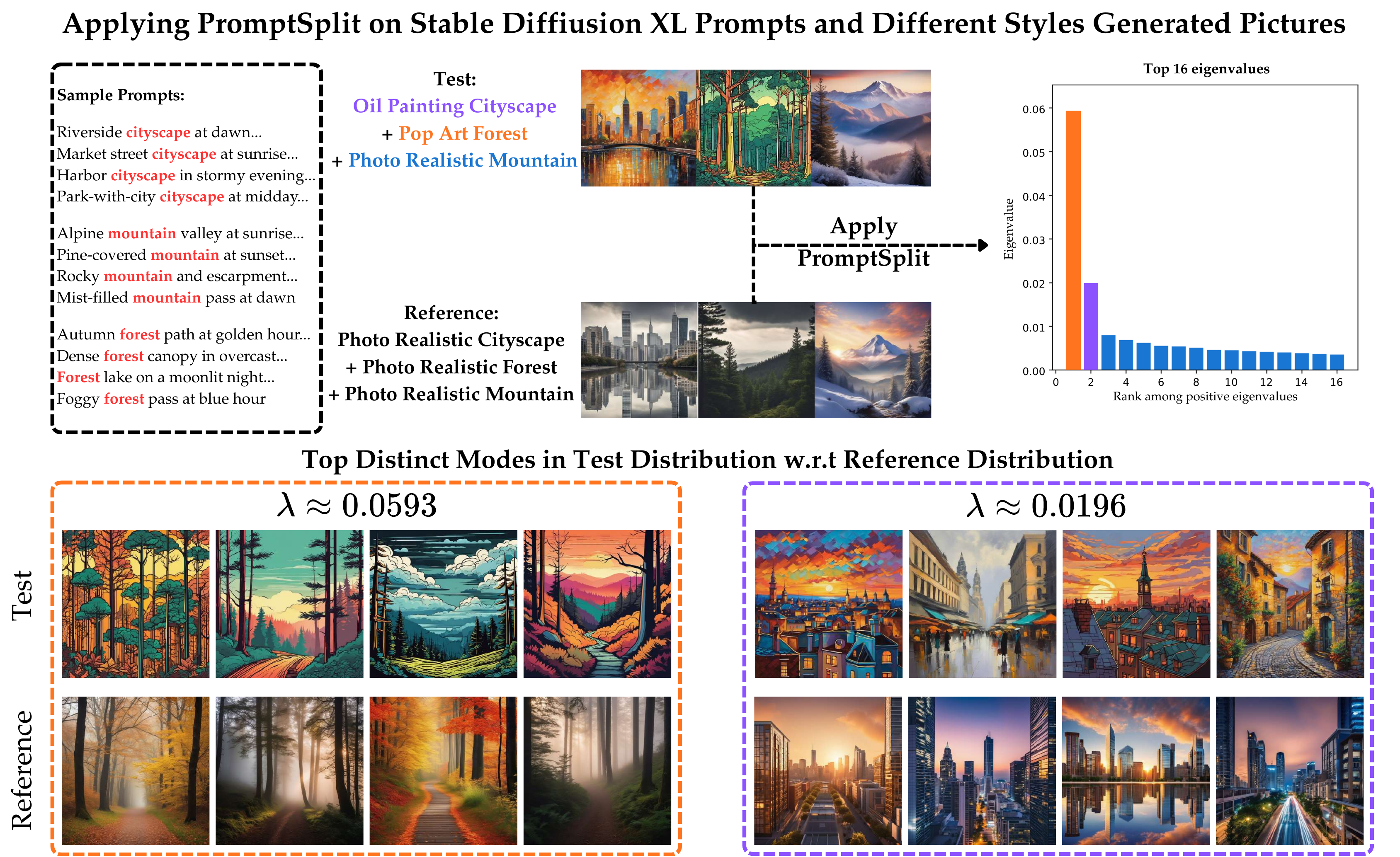}\vspace{-2mm}
    \caption{PromptSplit detected style and scene disagreements in a controlled text-to-image setting. (Top) Sample prompts, generated outputs, and bar plot of top 16 eigenvalues. (Bottom) Strongest samples for top identified distinct modes.}
    \label{fig:sdxl_controlled}
\end{figure*}

\subsection{Answer Distribution Disagreement Detection}
We compare Llama 3.2 (Test) and Gemma 3 (Reference) on three prompt clusters (Figure~\ref{fig:llama_gemma} for celebrity, American city, and fast modern fuel car, using 100 prompts per cluster and 10,000 generated answers per cluster (30,000 prompt–answer pairs per model). The per-cluster answer dispersions show a clear difference in answer generation: Gemma collapses strongly onto one or two dominant entities in all clusters, while Llama’s answers are more distributed across multiple names. Applying PromptSplit to the aggregated prompt–answer pairs yields a small number of dominant test-dominant disagreement directions, and the leading samples for these modes isolate interpretable, cluster-specific differences (top three modes respectively shows entities in celebrities, modern cars, and cities that are most different from the reference set answers) which aligns with variations in prompt/answer behaviors visible in the dispersions.

\begin{figure*}[t]
    \centering
    \includegraphics[width=\linewidth]{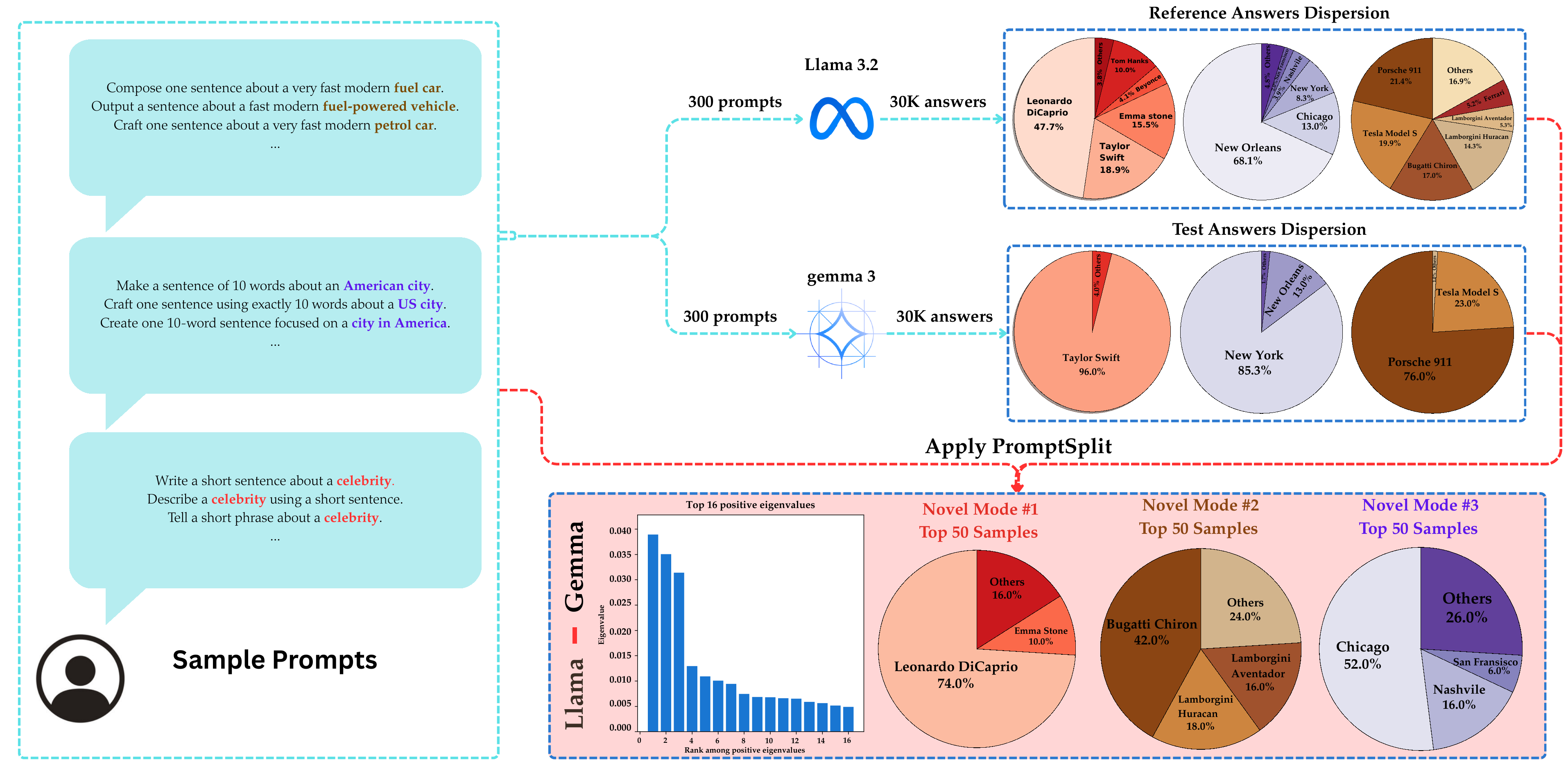}
    \caption{PromptSplit reveals prompt-dependent answer-mode differences between Llama 3.2 (test) and Gemma 3 (reference) on three prompt clusters (celebrity, American city, and fast modern fuel car). Top: per-cluster empirical answer dispersion for each model, highlighting differences in concentration vs. diversity. Bottom: PromptSplit identified three clusters with leading positive eigenvalues (left) and top answers only generated by test model.}
    \label{fig:llama_gemma}
\end{figure*}

\subsection{Controlled Answers Disagreement.}
To validate that PromptSplit can recover known differences, we construct a controlled text-to-text setting (Figure~\ref{fig:deepseek}) where both “reference” and “test” outputs are produced by the same DeepSeek-r1 \cite{deepseek} model, but with different answer-control constraints. Prompts are partitioned into three clusters, and the control is intentionally applied so that the test condition differs from the reference primarily in the movie-genre and historical-phenomena clusters, while the third cluster is a control where reference and test are intended to match. We apply PromptSplit by embedding prompts and answers, constructing a joint prompt–answer similarity structure, and computing the leading eigen-directions of the covariance-difference operator, where large positive eigenvalues indicate the most prominent test-dominant disagreement modes. The resulting eigenspectrum shows two dominant positive modes, consistent with the two clusters where answers were deliberately shifted, and inspecting the leading samples for these modes demonstrates human-interpretable differences (Horror w.r.t Action and Poleponnesian War w.r.t The Fall of Roman Empire) that align with the intended controlled shifts.

\begin{figure*}[t]
    \centering
    \includegraphics[width=\linewidth]{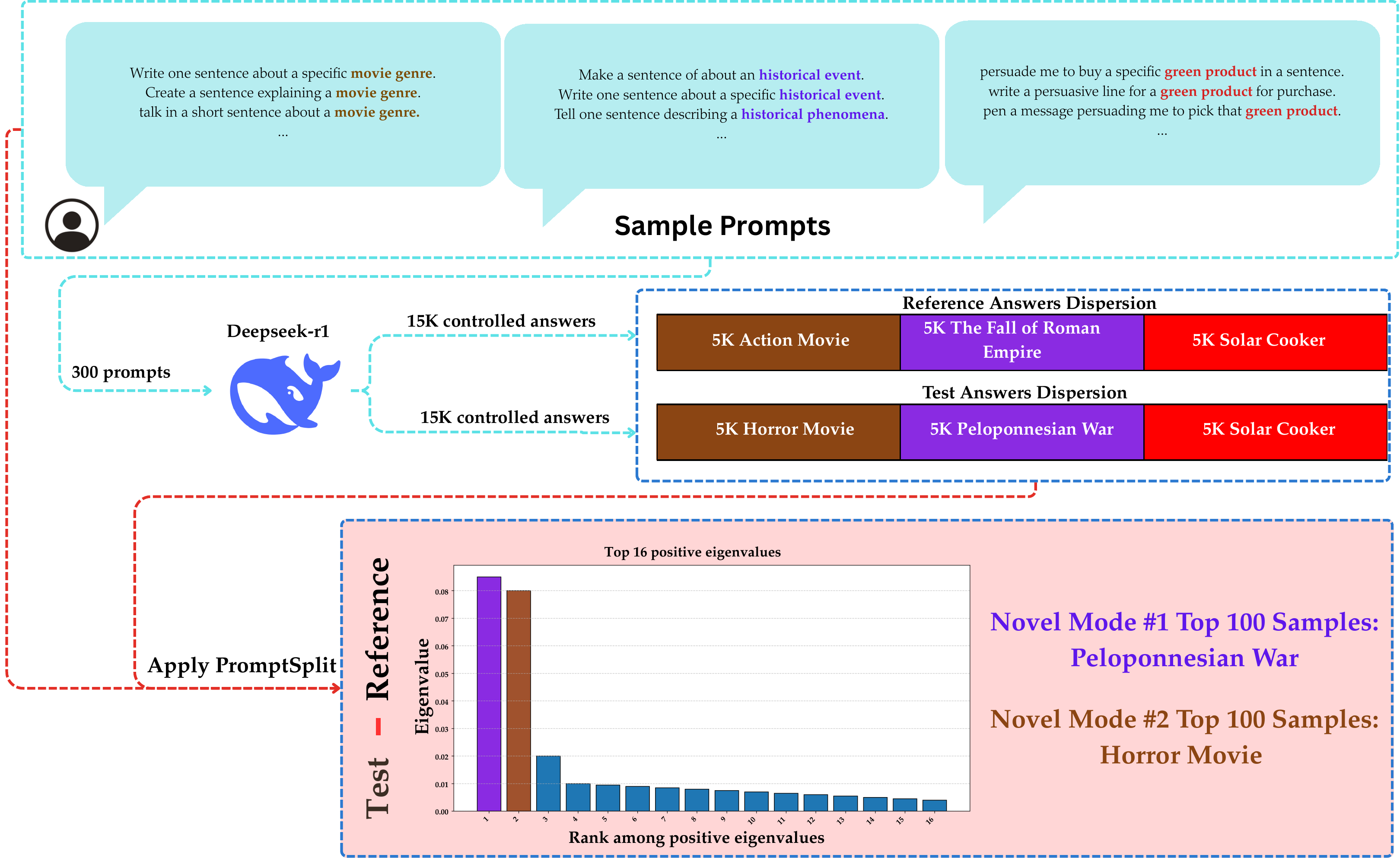}
    \caption{Top: three prompt clusters (movie genre, historical event/phenomena, and green-product persuasion) Middle: empirical answer dispersion for reference vs. test, illustrating that the control mechanism induces systematic shifts in two clusters while keeping the third comparatively stable. Bottom: PromptSplit’s leading positive eigenvalues identify the strongest test-dominant disagreement directions, and highest-attribution samples shows two interpretable novel modes (Peloponnesian War and Horror Movie).
}
    \label{fig:deepseek}
\end{figure*}

\subsection{Ablation Study on Number of random Fourier features}
We evaluate projection dimension $r \in \{800, 1500, 2000, 3000, 5000\}$ on the full 30k-pair SDXL vs. PixArt-$\Sigma$ comparison ($\eta$ = 1, DINOv2-giant + Sentence-BERT embeddings, bandwidth selected via eigenvalue-gap heuristic) (Figures \ref{fig:r_ablation1}, \ref{fig:r_ablation2}).
At r = 800, primary modes are recovered but eigenvalues are compressed and smaller modes noisier. By r = 1500–2000, eigenvalue magnitudes increase, mode ranking stabilizes, and prompt attribution sharpens. At r = 3000 (default), top modes match those at r = 5000 almost identically in spectrum shape, prompt semantics, and visual patterns.
These results align with Theorem \ref{thm:rp}  $O(1/\sqrt{r})$ eigenvalue-vector bound and confirm that modest r suffices for approximation of leading disagreement directions at scale.

\subsection{Ablation Study on Number of Samples}
To investigate the effect of dataset size on the quality and stability of the detected disagreement modes, we evaluate the random-projection variant of PromptSplit across varying numbers of prompt–output pairs (Figures \ref{fig:n_ablation1}, \ref{fig:n_ablation2}). We use the MS-COCO 2014 validation set (30k captions) as the base and create subsampled versions with $n = m \in \{5k, 10k, 20k, 30k, 50k, 90k\}$ pairs per model/dataset. All experiments employ the same hyperparameters (r = 3000, $\eta$ = 1, DINOv2-giant + Sentence-BERT embeddings, bandwidth selected via eigenvalue-gap heuristic).
Figure X (top row) shows the top-10 eigenvalues and the strongest-attributed prompts/images for the pairwise comparison SDXL vs. PixArt-$  \Sigma  $ across different sample sizes. We observe that:

\begin{itemize}
    \item With as few as 5k–10k samples, the method already identifies the dominant disagreement modes (e.g., tennis-court geometry, plate placement, ramp/sideboard composition), with the largest eigenvalues corresponding to semantically consistent prompt clusters, however there are minor flaws in some modes.
    \item At 30k samples (the full MS-COCO val2014 size), eigenvalue magnitudes stabilize, and the ranking of top modes becomes highly consistent with the full-dataset run.
    Beyond 30k (up to 90k subsampled with replacement from the same caption pool), further gains are marginal: the top 3–5 modes remain nearly identical in attributed prompts and visual patterns, while smaller modes show minor fluctuations due to sampling variance.
    \item Beyond 30k (up to 90k generated with different seeds from the same 30K caption), further gains are marginal: the top 3–5 modes remain nearly identical in attributed prompts and visual patterns, while smaller modes show minor fluctuations due to added samples disagreements.
\end{itemize}

These results indicate that PromptSplit is robust even at moderate sample sizes and 30k pairs are typically sufficient to reliably recover the principal directions of prompt-dependent disagreement in real-world text-to-image settings. Importantly, the kernel-based (non-projected) formulation becomes computationally prohibitive beyond ~10K-15K samples $(O((n+m)^3)$ scaling), showing the necessity of the random-projection approximation for large-scale analysis.
Runtime measurements (single A5000 GPU) are reported in Table \ref{tab:mscoco-comparison-time}.

\subsection{Synthetic Gaussian Mixture.}
In a controlled synthetic study (Figure~\ref{fig:synthetic}), eight 50-dimensional Gaussian components simulate test embeddings and eight simulate reference embeddings, with 8-dimensional one-hot prompts shared within each cluster and 100 samples per component; joint projections visibly separate clusters and the eigen-spectrum of the covariance-difference operator behaves predictably: when the test and reference mixtures differ in k components, PromptSplit yields k positive and k negative principal eigenvalues, while when all components coincide, the spectrum collapses near zero—empirically validating that the method detects precisely the number and directionality of cluster-level disagreements.

\subsection{Additional NQ-Open Experiments}
Figures \ref{fig:gemma_llama_nq}, \ref{fig:llama_nq_nq}, \ref{fig:nq_llama_nq} extend the NQ-Open study to additional LLM pairs and show that PromptSplit isolates prompt-conditioned disagreement in a consistent, interpretable way across comparisons. In each figure, we report the leading positive eigenmodes and we qualitatively inspect the highest-attribution question clusters to reveal the prompt families that most strongly separate the two models’ behaviors. Across pairs, the dominant modes typically correspond to coherent topical categories (e.g., particular entity-centric question types), where the models diverge in answer selection, factual framing, or phrasing. Importantly, alongside these high-disagreement directions, we also include a representative low-eigenvalue (or “similar”) mode which shows questions that receive weak attribution under PromptSplit and yield largely aligned answers across the two models.

\subsection{Disagreement Identification on MS-COCO Captions.}
We generated images using three state-of-the-art text-to-image models—Stable Diffusion XL (SDXL), PixArt-$ \Sigma $, and Kandinsky on the 30,000 captions from the MS-COCO 2014 validation set. Due to the large dataset size, we applied the scalable random-projection variant of PromptSplit (with projection dimension $  r=3000  $) to identify prompt clusters that induce systematic behavioral disagreements in the generated images (Figures \ref{fig:pixart_sdxl_mscoco}, \ref{fig:kandinsky_mscoco_mscoco}, \ref{fig:sdxl_mscoco_mscoco}, \ref{fig:mscoco_sdxl_mscoco}, \ref{fig:pixart_mscoco_mscoco}, \ref{fig:mscoco_pixart_mscoco})

We performed pairwise comparisons between each pair of models (SDXL vs. PixArt-$ \Sigma $, SDXL vs. Kandinsky, PixArt-$  \Sigma  $ vs. Kandinsky) as well as comparisons of each model against the real MS-COCO validation images (serving as a reference distribution). For each comparison, the top 5 identified modes of disagreement with the largest positive eigenvalues. Each mode is illustrated a selection of generated (or real reference) images.

These visualizations highlight interpretable prompt families where the models diverge in visual style, composition, object placement, realism, or alignment with the input text, complementing aggregate metrics such as FID or CLIPScore that do not capture prompt-conditioned differences.

\begin{figure*}
    \centering
    \includegraphics[width=0.95\linewidth]{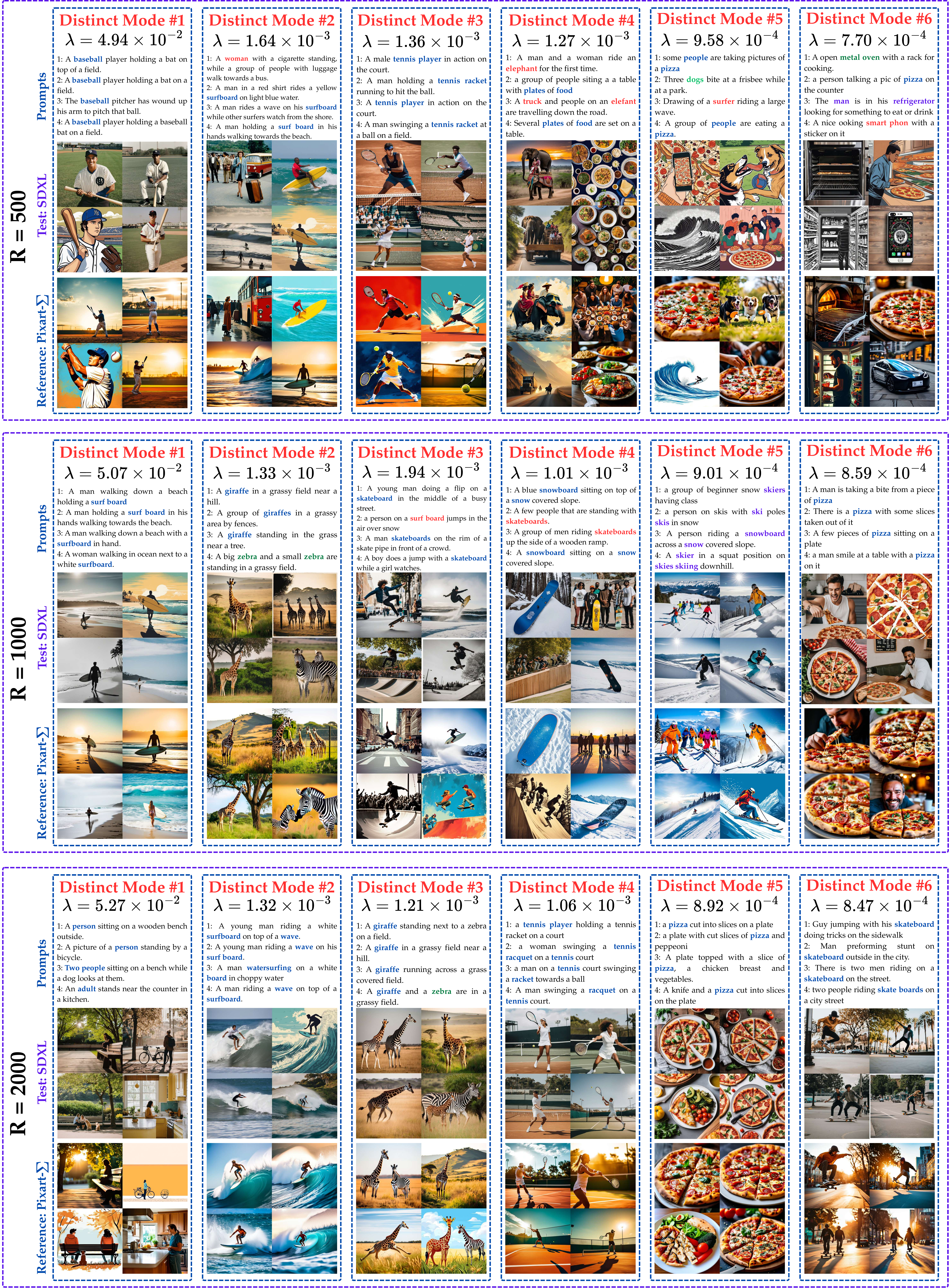}
    \caption{Top six modes identified by PromptSplit comparing SDXL - Pixart$\Sigma$ generatead images on MS-COCO with different Random Fourier features (RFF) r ranging from 1000 to 2000}
    \label{fig:r_ablation1}
\end{figure*}

\begin{figure*}
    \centering
    \includegraphics[width=0.95\linewidth]{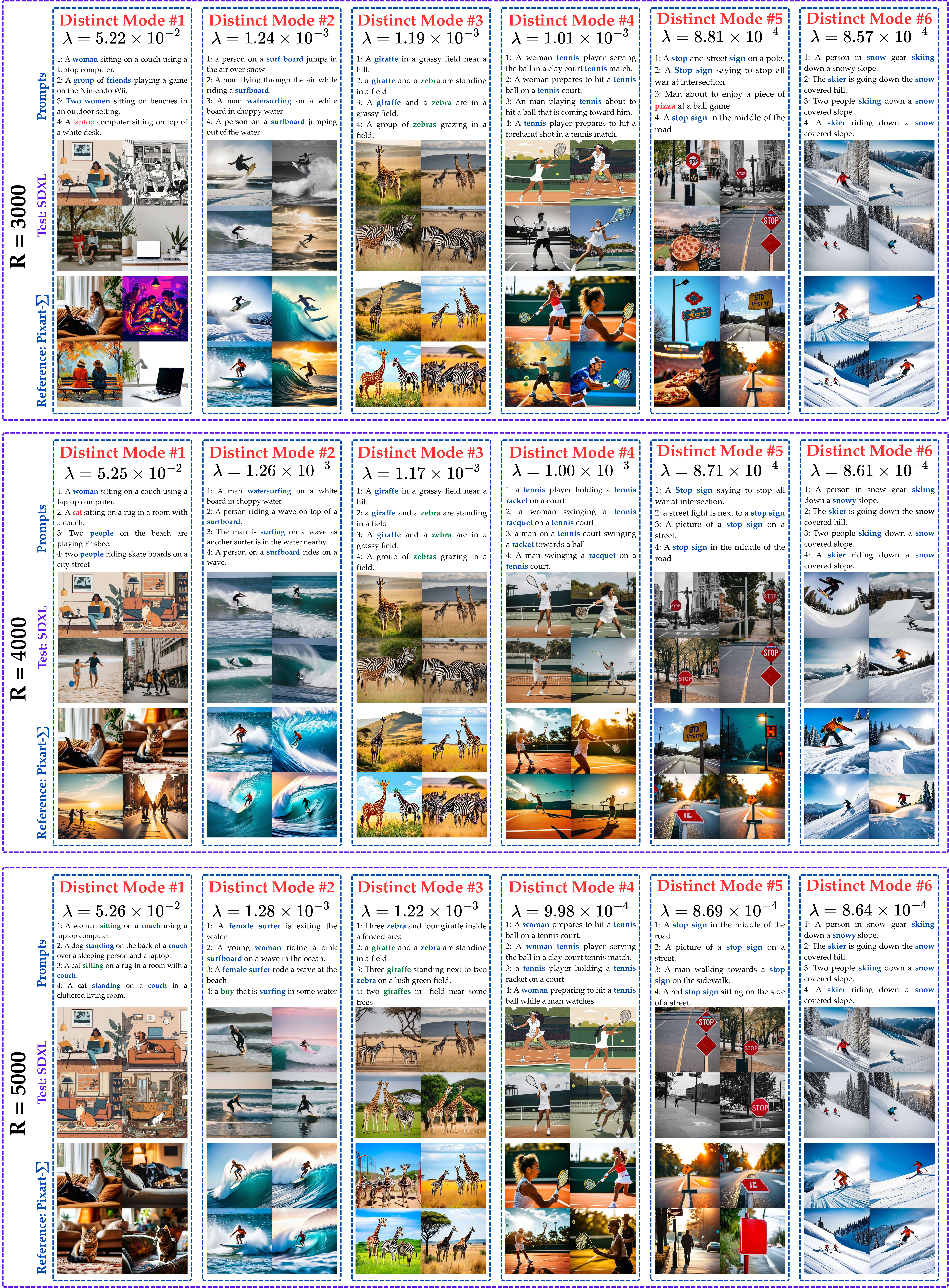}
    \caption{Top six modes identified by PromptSplit comparing SDXL - Pixart$\Sigma$ generatead images on MS-COCO with different Random Fourier features (RFF) r ranging from 3000 to 6000}
    \label{fig:r_ablation2}
\end{figure*}

\begin{figure*}
    \centering
    \includegraphics[width=0.95\linewidth]{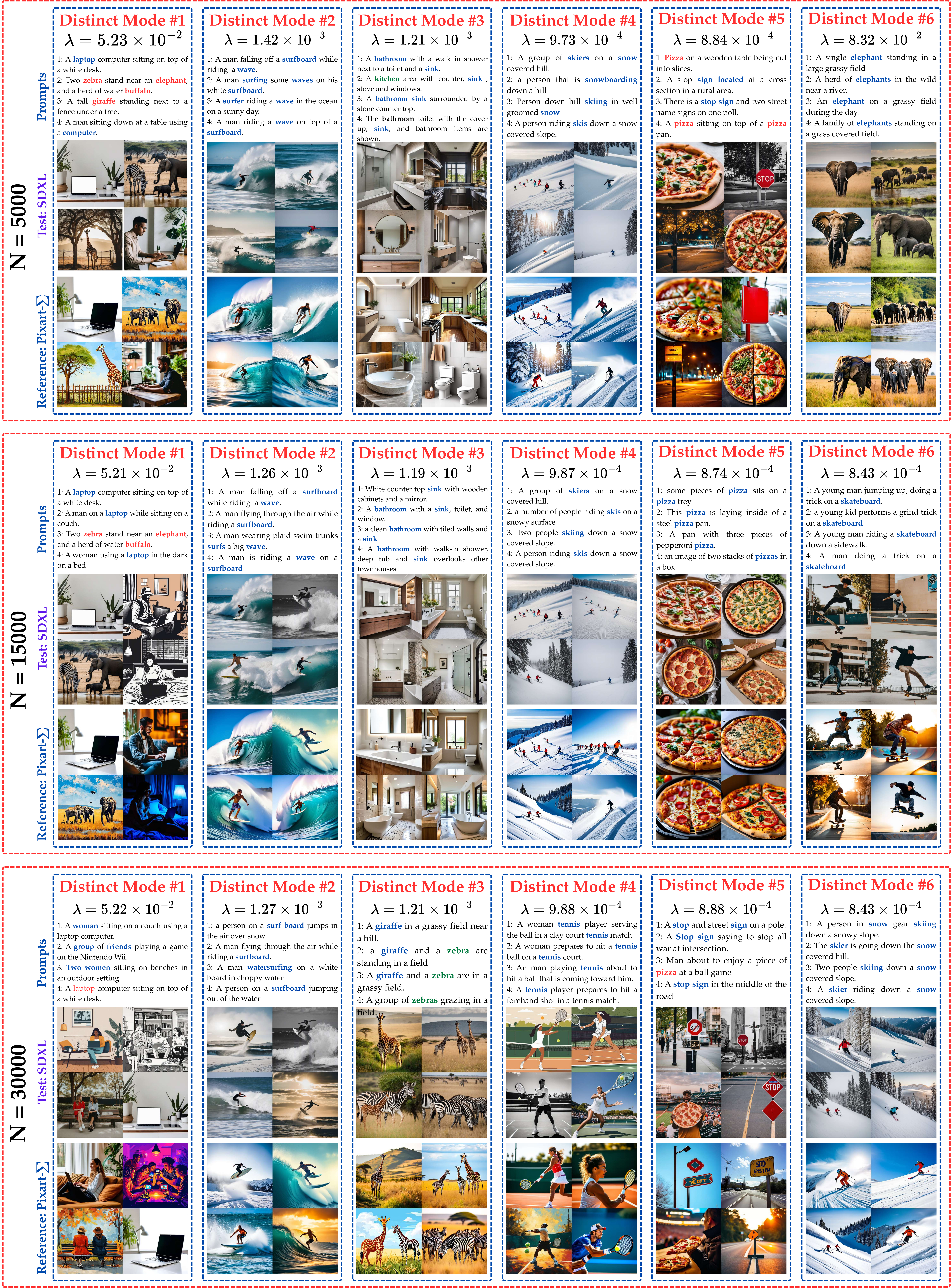}
    \caption{Top six modes identified by PromptSplit comparing SDXL - Pixart$\Sigma$ generatead images on MS-COCO with different number of samples n ranging from 5000 to 30000}
    \label{fig:n_ablation1}
\end{figure*}

\begin{figure*}
    \centering
    \includegraphics[width=0.95\linewidth]{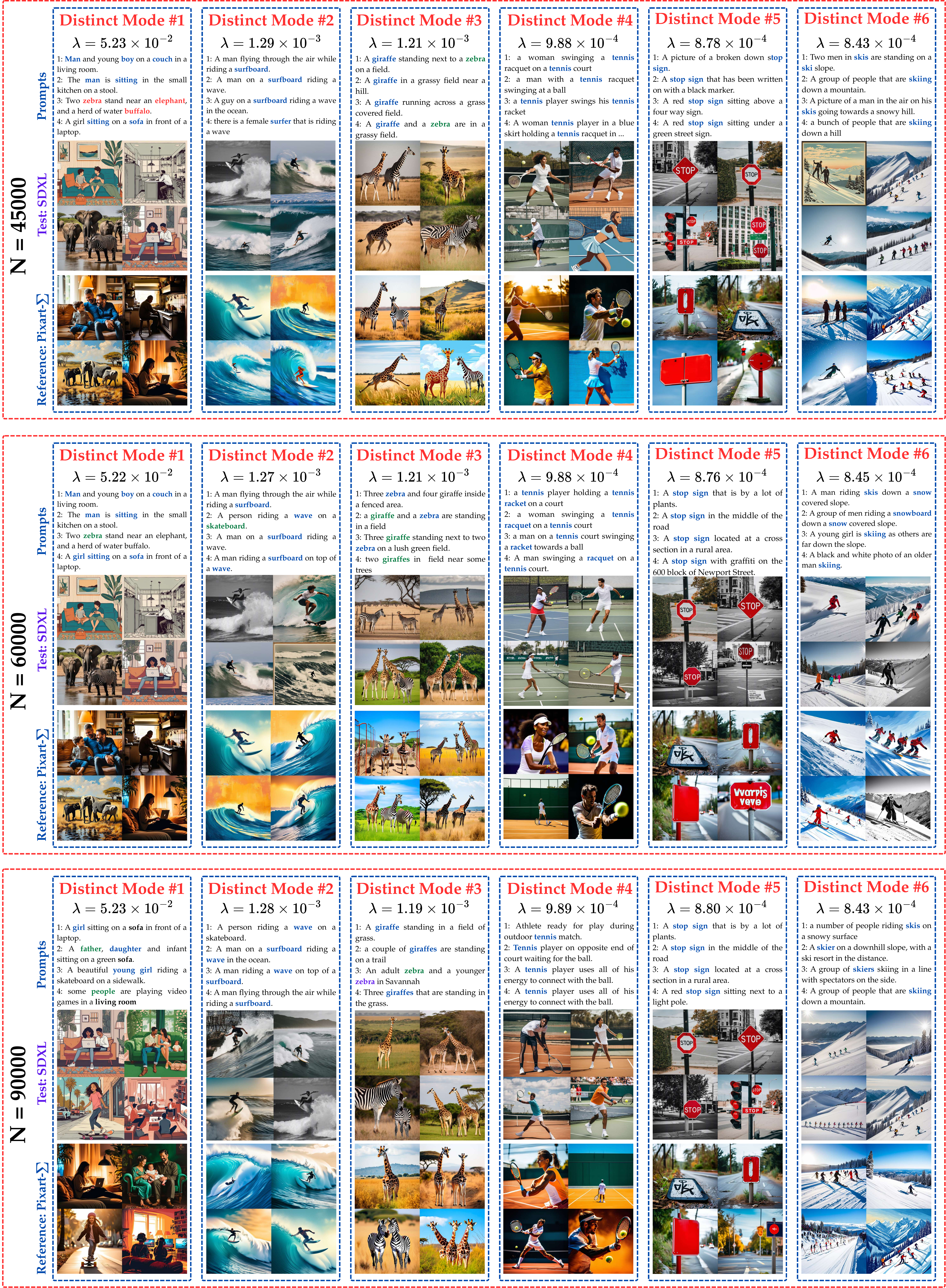}
    \caption{Top six modes identified by PromptSplit comparing SDXL - Pixart$\Sigma$ generatead images on MS-COCO with different number of samples n ranging from 45000 to 90000}
    \label{fig:n_ablation2}
\end{figure*}

\begin{figure*}[t]
    \centering
    \includegraphics[width=\linewidth]{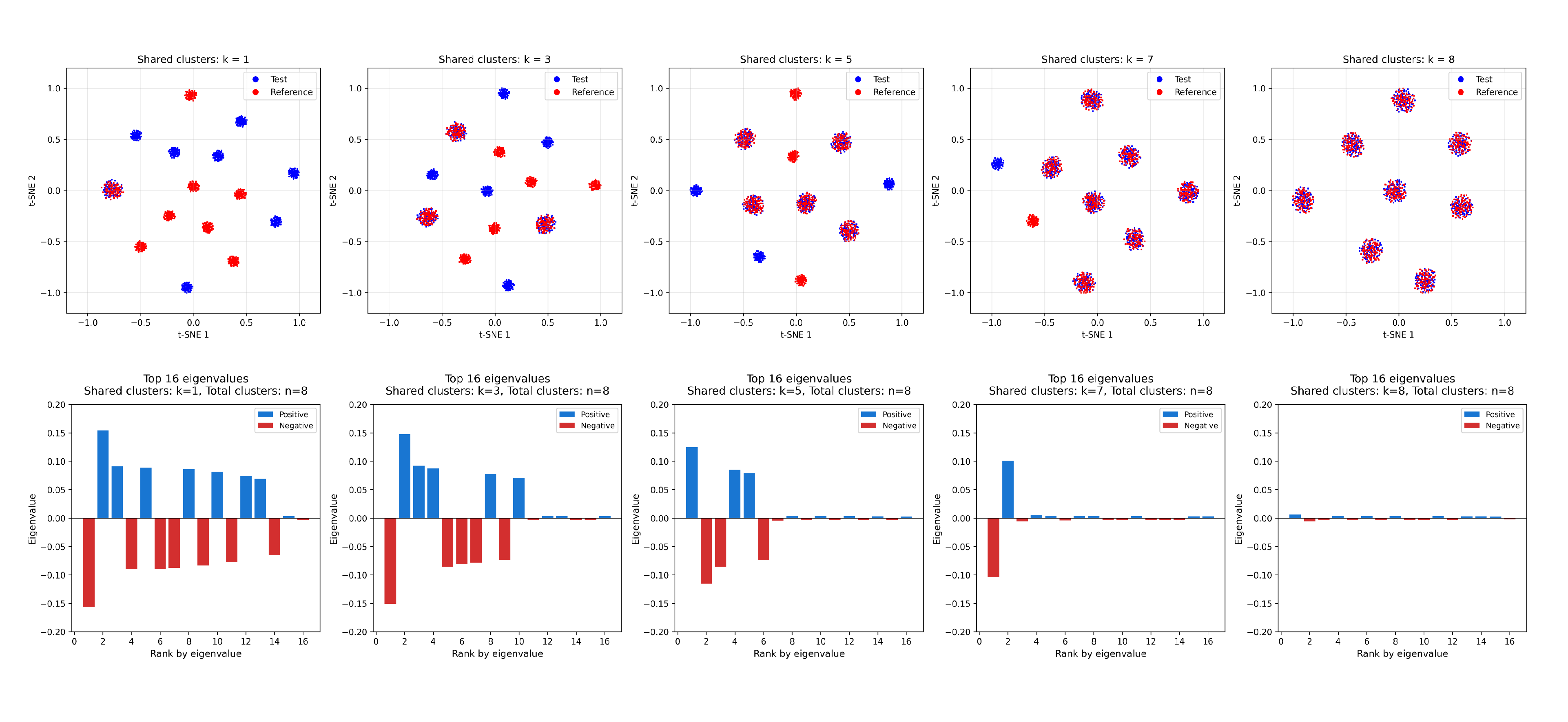}
    \caption{PromptSplit identified the numbers of distinct modes with significant large eigenvalues}.
    \label{fig:synthetic}
\end{figure*}

\begin{figure*}
    \centering
    \includegraphics[width=1\linewidth]{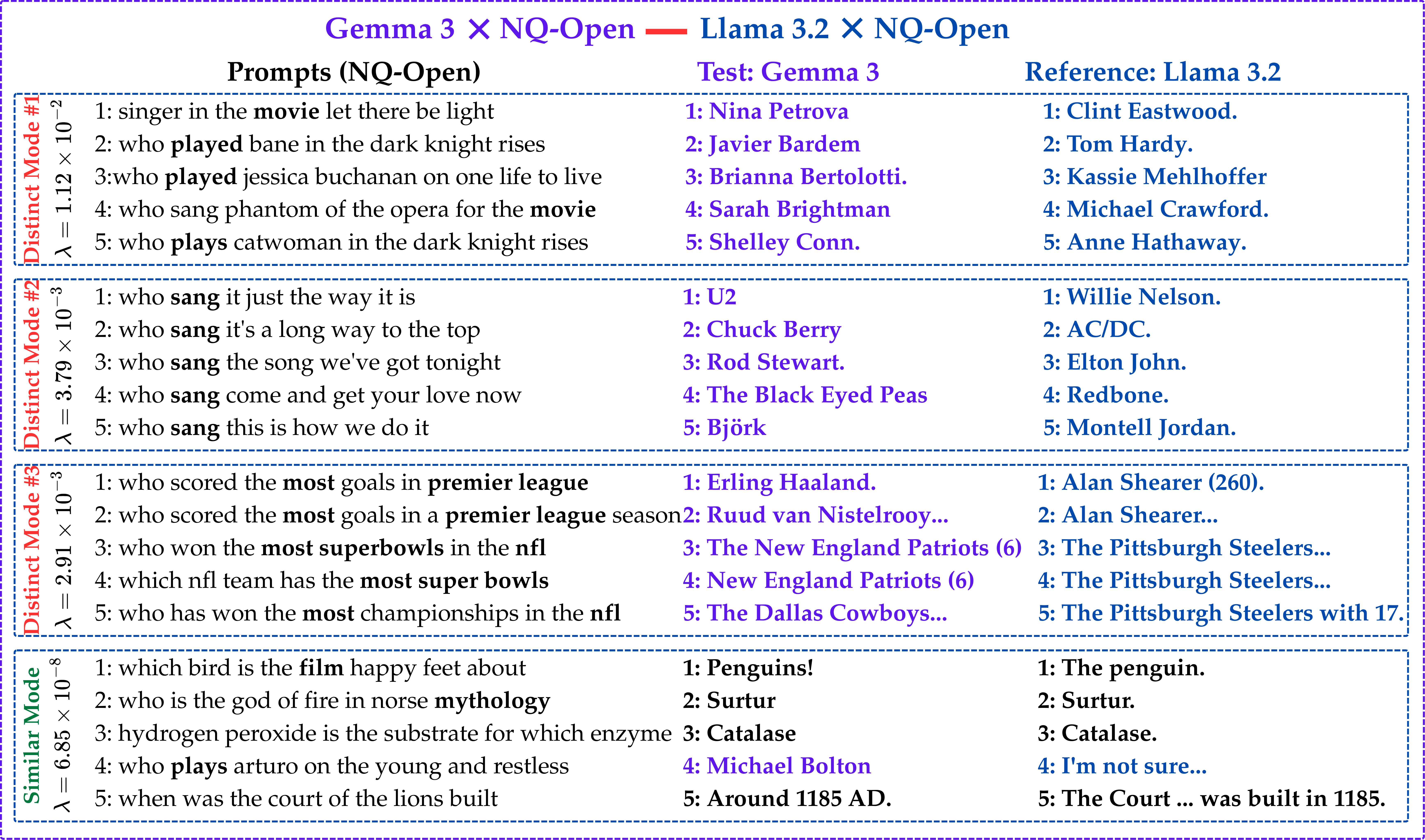}
    \caption{PromptSplit detected top distinct modes for 20000 generated short answers for test (Gemma3) w.r.t reference (Llama3.2) over NQ-Open questions as prompts.}
    \label{fig:gemma_llama_nq}
\end{figure*}


\begin{figure*}
    \centering
    \includegraphics[width=0.96\linewidth]{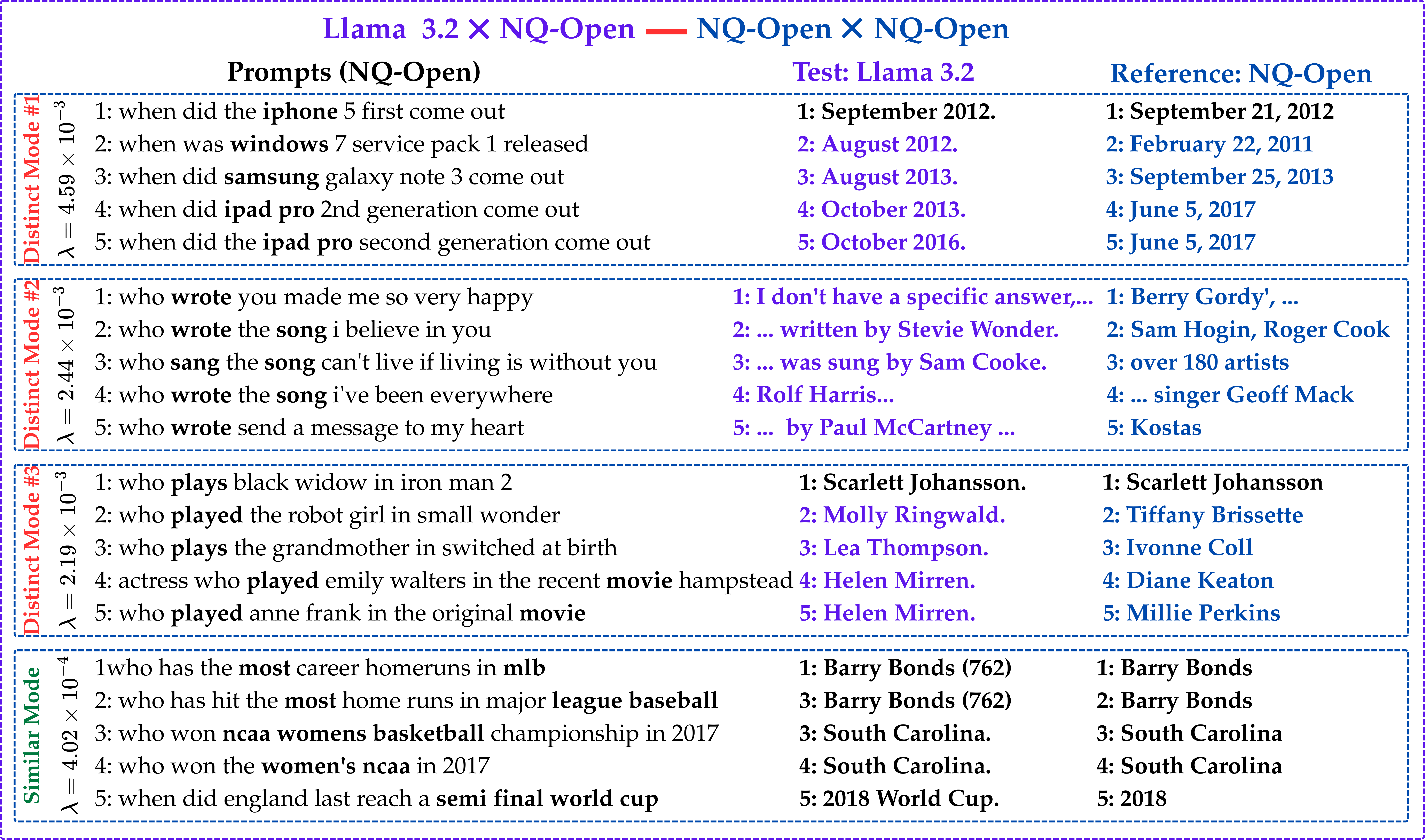}
    \caption{PromptSplit detected top distinct modes for 20000 generated short answers for test (Llama 3.2) w.r.t reference (NQ-Open Answers) over NQ-Open questions as prompts.}
    \label{fig:llama_nq_nq}
\end{figure*}

\begin{figure*}
    \centering
    \includegraphics[width=0.96\linewidth]{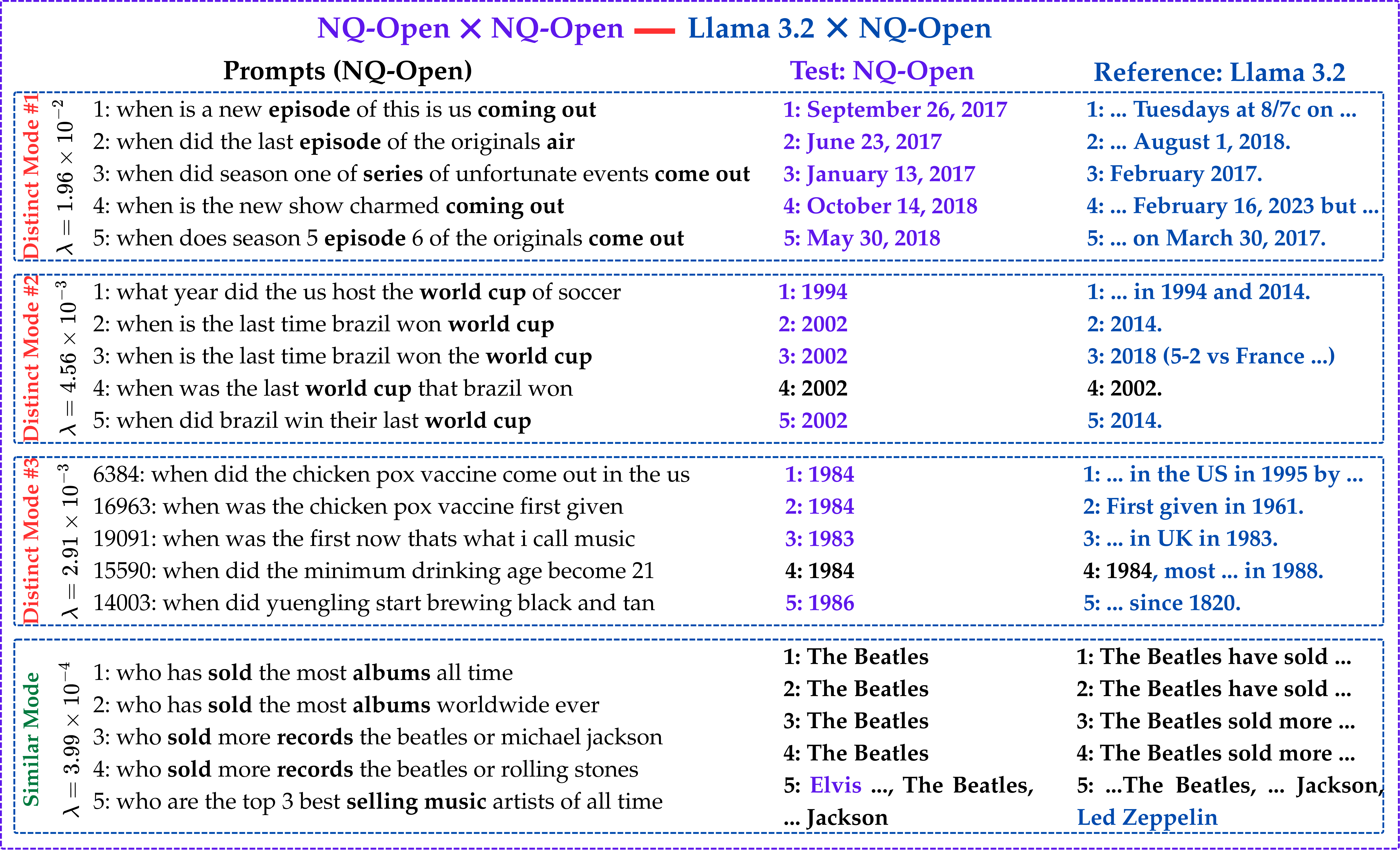}
    \caption{PromptSplit detected top distinct modes for 20000 short answers for test (NQ-Open) w.r.t generated answers for reference (Llama 3.2) over NQ-Open questions as prompts.}
    \label{fig:nq_llama_nq}
\end{figure*}

\begin{figure*}
    \centering
    \includegraphics[width=1\linewidth]{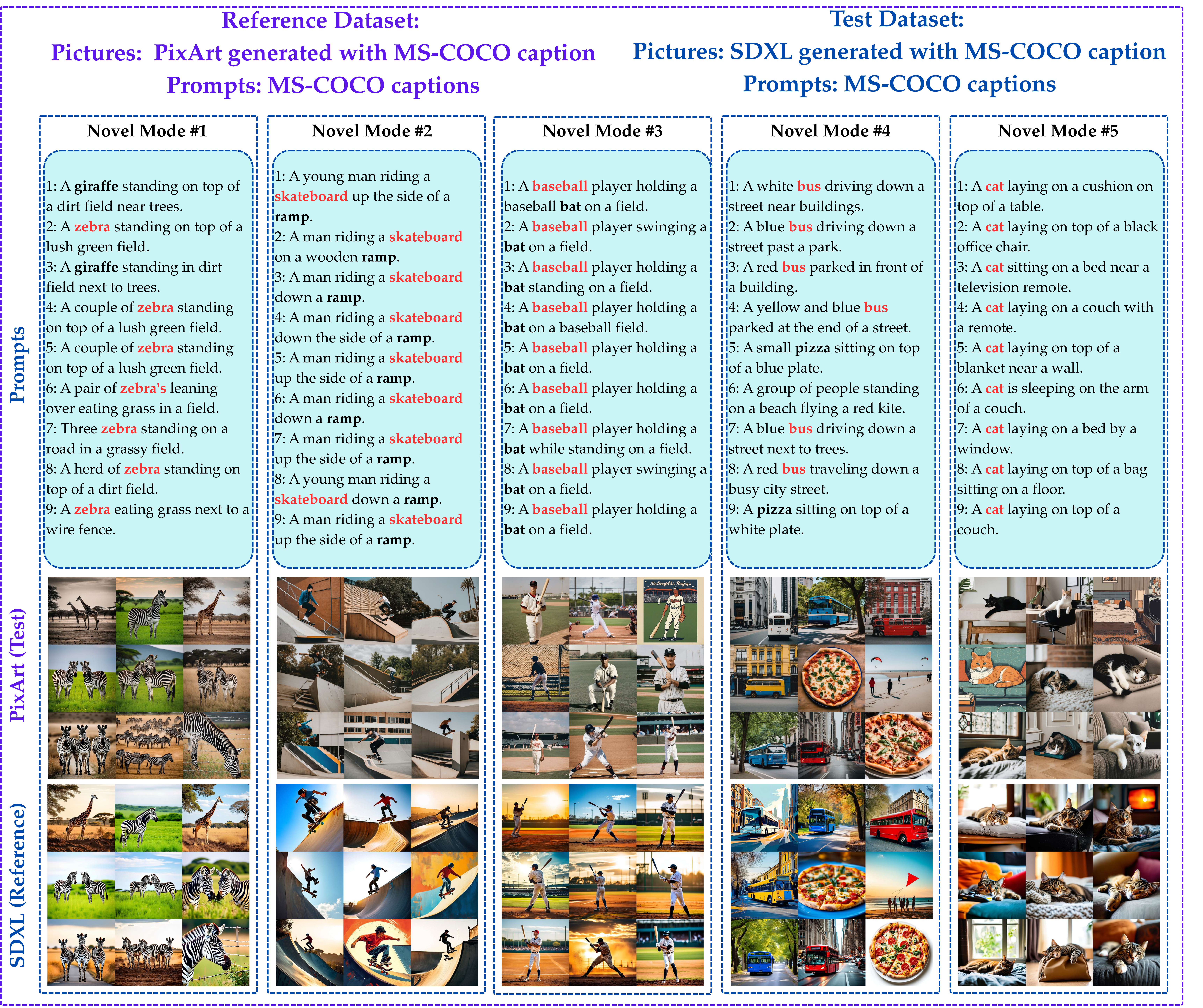}
    \caption{PromptSplit detected top novel modes for 30000 pictures generated by test (Pixart-$\Sigma$) w.r.t reference model (SDXL) over MS-COCO captions as prompts.}
    \label{fig:pixart_sdxl_mscoco}
\end{figure*}

\begin{figure*}
    \centering
    \includegraphics[width=1\linewidth]{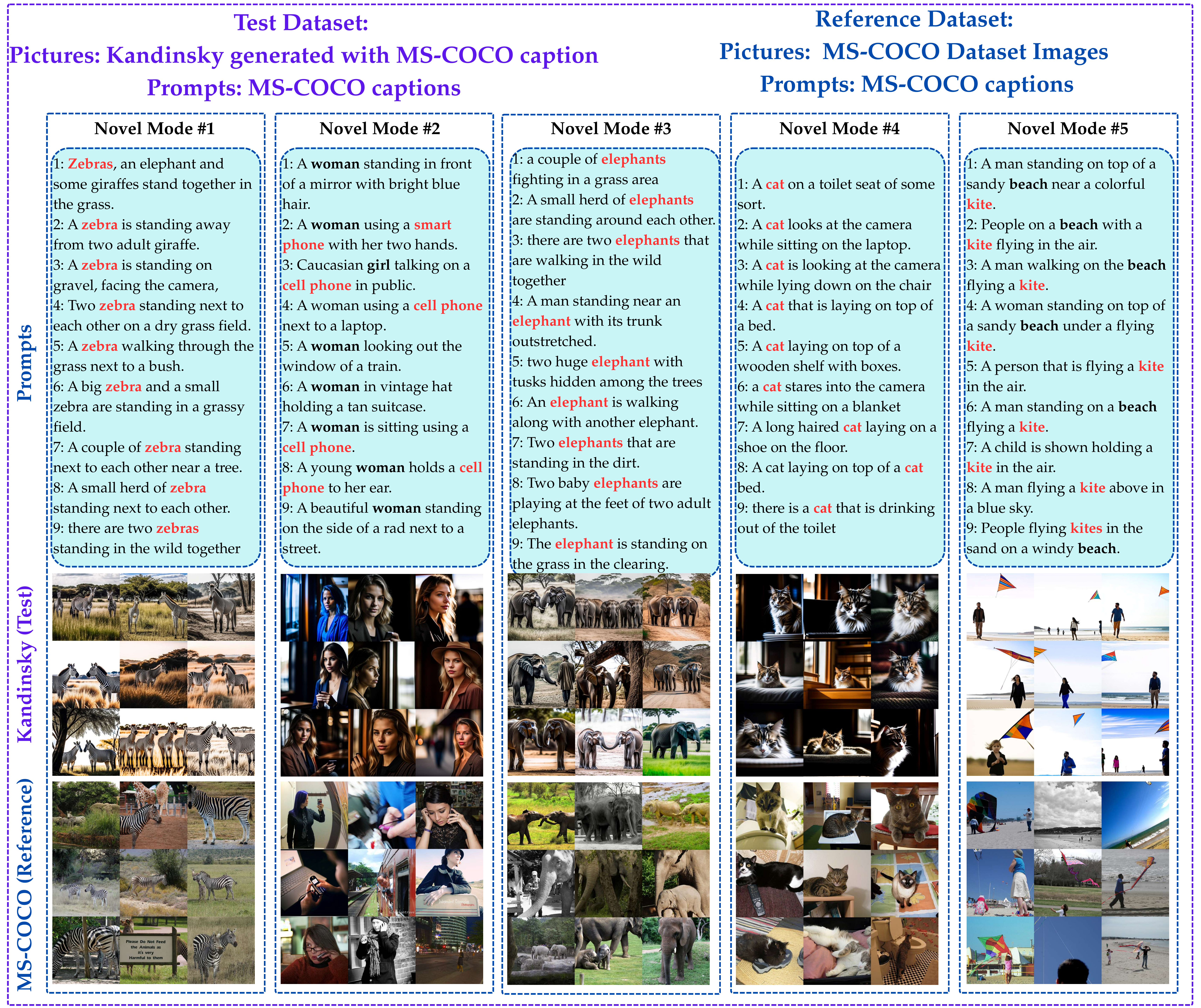}
    \caption{PromptSplit detected top novel modes for 30000 pictures generated by test (Kandinsky) w.r.t reference  (MS-COCO images) over MS-COCO captions as prompts.}
    \label{fig:kandinsky_mscoco_mscoco}
\end{figure*}

\begin{figure*}
    \centering
    \includegraphics[width=1\linewidth]{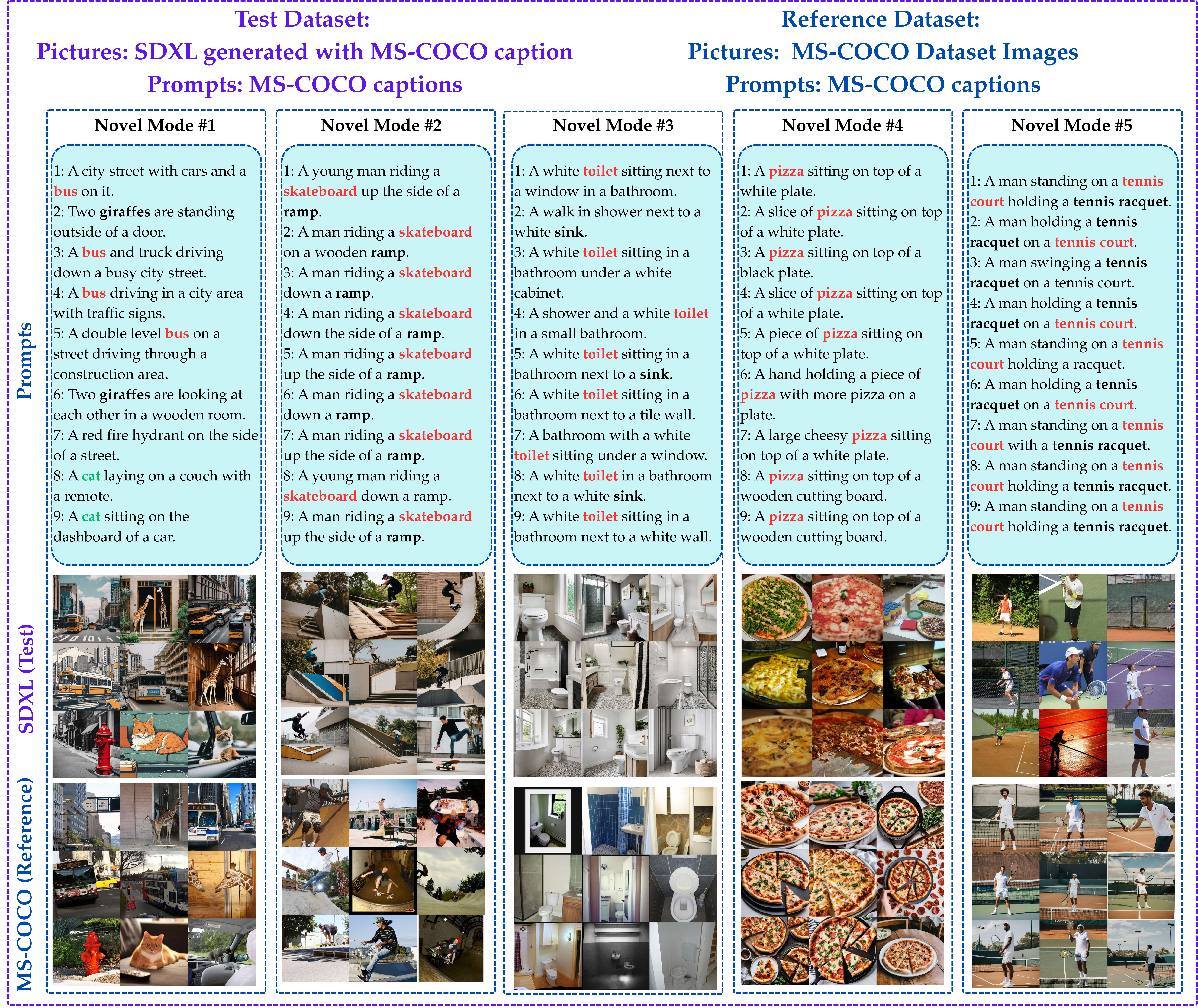}
    \caption{PromptSplit detected top novel modes for 30000 pictures generated by test (SDXL) w.r.t reference (MS-COCO images) over MS-COCO captions as prompts.}
    \label{fig:sdxl_mscoco_mscoco}
\end{figure*}

\begin{figure*}
    \centering
    \includegraphics[width=1\linewidth]{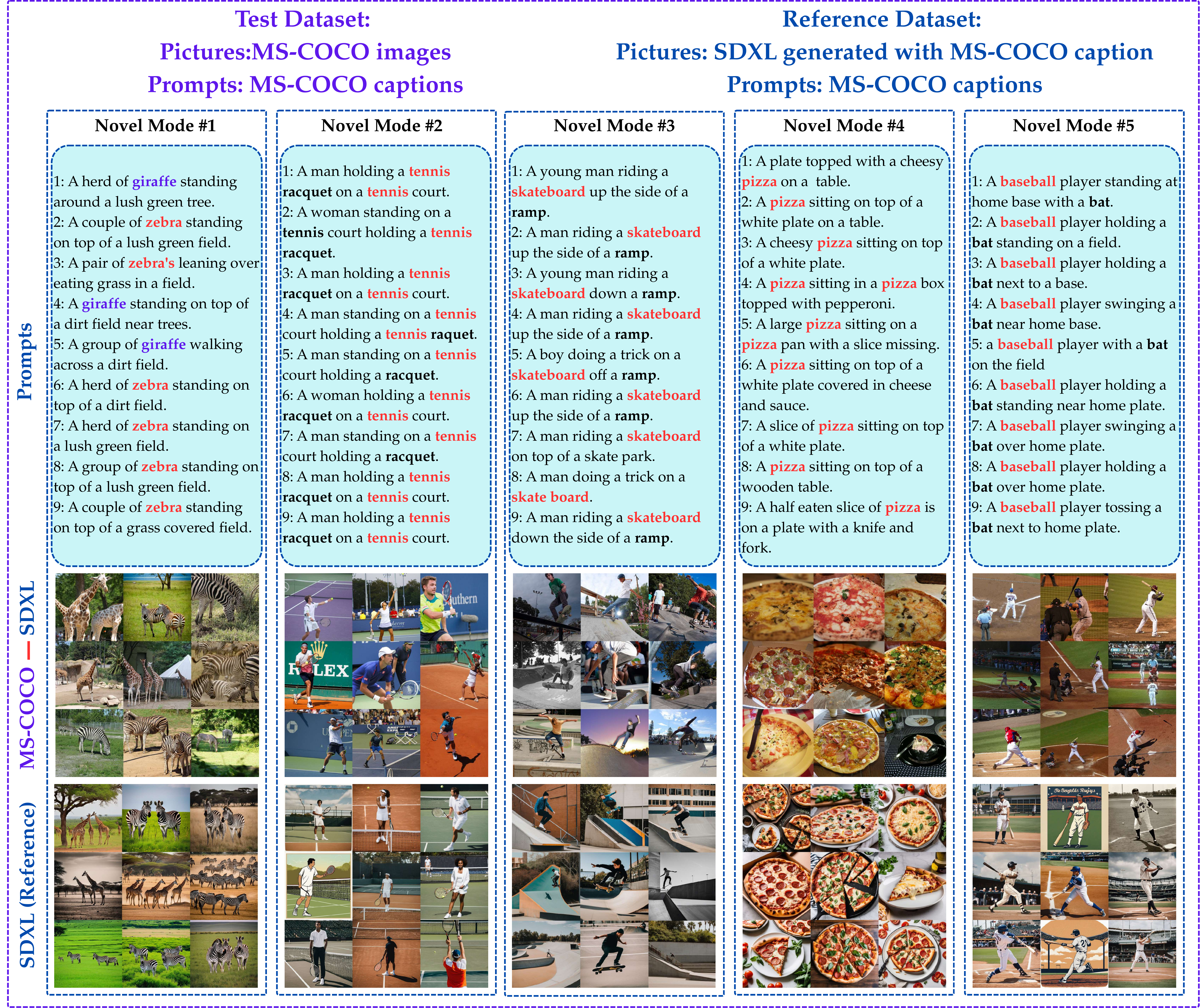}
    \caption{PromptSplit detected top novel modes for 30000 images for test (MS-COCO) w.r.t reference (SDXL generated) over MS-COCO captions as prompts.}
    \label{fig:mscoco_sdxl_mscoco}
\end{figure*}

\begin{figure*}
    \centering
    \includegraphics[width=1\linewidth]{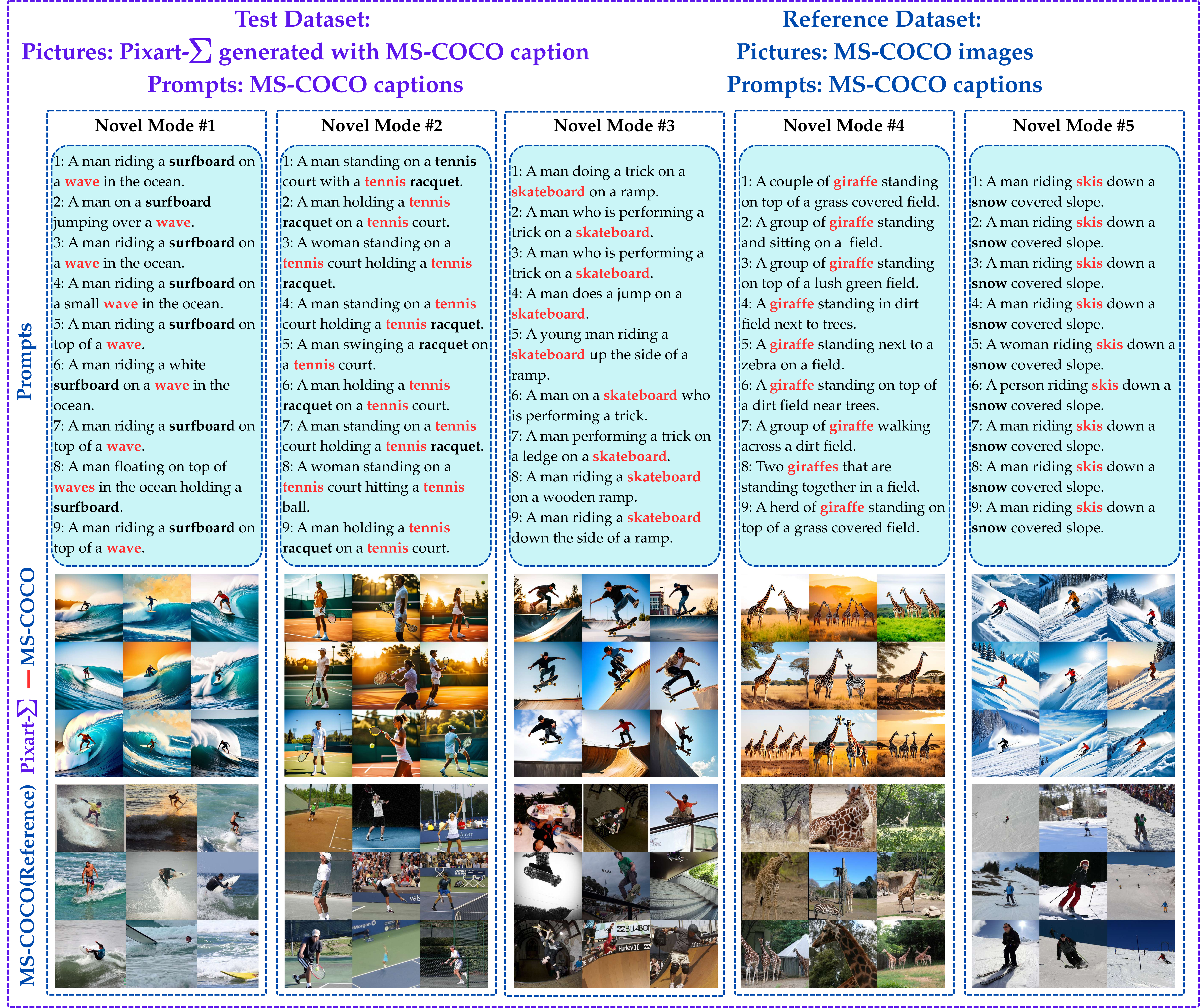}
    \caption{PromptSplit detected top novel modes for 30000 pictures generated by test (Pixart-$\Sigma$) w.r.t reference (MS-COCO images) over MS-COCO captions as prompts.}
    \label{fig:pixart_mscoco_mscoco}
\end{figure*}

\begin{figure*}
    \centering
    \includegraphics[width=1\linewidth]{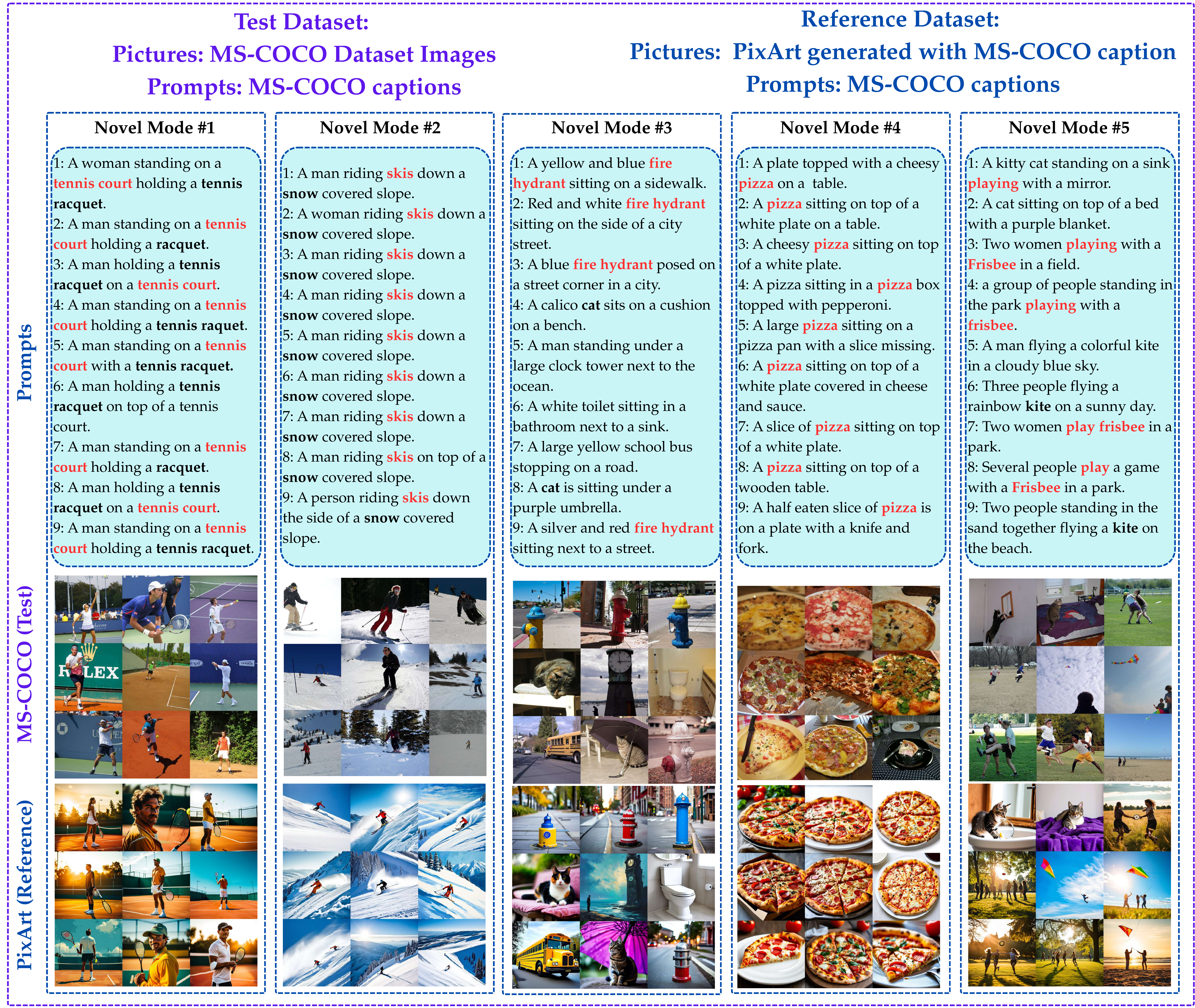}
    \caption{PromptSplit detected top novel modes for 30000 images for test (MS-COCO) w.r.t reference (Pixart-$\Sigma$ generated) over MS-COCO captions as prompts.}
    \label{fig:mscoco_pixart_mscoco}
\end{figure*}